\algrenewcommand\algorithmicindent{1em}%
\setlist[itemize]{itemsep=0pt, parsep=0.5em, topsep=0pt}
\DeclareMathOperator*{\argmin}{arg\,min}
\DeclareMathOperator*{\argmax}{arg\,max}
\newtheorem{theorem}{Theorem}
\newtheorem{lemma}{Lemma}
\theoremstyle{definition}
\newtheorem{definition}{Definition}
\theoremstyle{remark}
\newtheorem*{remark}{Remark}
\let\oldproof\proof
\def\proof{\oldproof\unskip}
\crefname{equation}{Eqn.}{Eqn.}
\icmltitlerunning{Clinician-in-the-Loop Decision Making: RL with Near-Optimal Set-Valued Policies}
\begin{document}

\twocolumn[
\icmltitle{Clinician-in-the-Loop Decision Making: \\
Reinforcement Learning with Near-Optimal Set-Valued Policies}

% It is OKAY to include author information, even for blind
% submissions: the style file will automatically remove it for you
% unless you've provided the [accepted] option to the icml2020
% package.

% List of affiliations: The first argument should be a (short)
% identifier you will use later to specify author affiliations
% Academic affiliations should list Department, University, City, Region, Country
% Industry affiliations should list Company, City, Region, Country

% You can specify symbols, otherwise they are numbered in order.
% Ideally, you should not use this facility. Affiliations will be numbered
% in order of appearance and this is the preferred way.
\icmlsetsymbol{equal}{*}

\begin{icmlauthorlist}
\icmlauthor{Shengpu Tang}{cse}
\icmlauthor{Aditya Modi}{cse}
\icmlauthor{Michael W. Sjoding}{med,ihpi}
\icmlauthor{Jenna Wiens}{cse}
\end{icmlauthorlist}

\icmlaffiliation{cse}{Department of Electrical Engineering \& Computer Science, University of Michigan, Ann Arbor, US}
\icmlaffiliation{med}{Department of Internal Medicine, Michigan Medicine, University of Michigan, Ann Arbor, US}
\icmlaffiliation{ihpi}{Institute for Healthcare Policy \& Innovation, University of Michigan, Ann Arbor, US}

\icmlcorrespondingauthor{Shengpu Tang}{\href{mailto:tangsp@umich.edu}{tangsp@umich.edu}}
\icmlcorrespondingauthor{Jenna Wiens}{\href{mailto:wiensj@umich.edu}{wiensj@umich.edu}}

% You may provide any keywords that you
% find helpful for describing your paper; these are used to populate
% the "keywords" metadata in the PDF but will not be shown in the document
\icmlkeywords{Machine Learning, ICML, reinforcement learning, healthcare, clinical application, near-optimal}

\vskip 0.3in
]

% this must go after the closing bracket ] following \twocolumn[ ...

% This command actually creates the footnote in the first column
% listing the affiliations and the copyright notice.
% The command takes one argument, which is text to display at the start of the footnote.
% The \icmlEqualContribution command is standard text for equal contribution.
% Remove it (just {}) if you do not need this facility.

\printAffiliationsAndNotice{}  % leave blank if no need to mention equal contribution
% \printAffiliationsAndNotice{\icmlEqualContribution} % otherwise use the standard text.

\begin{abstract}
  Standard reinforcement learning (RL) aims to find an optimal policy that identifies the best action for each state. However, in healthcare settings, many actions may be near-equivalent with respect to the reward (e.g., survival). We consider an alternative objective -- learning set-valued policies to capture near-equivalent actions that lead to similar cumulative rewards. We propose a model-free algorithm based on temporal difference learning and a near-greedy heuristic for action selection. We analyze the theoretical properties of the proposed algorithm, providing optimality guarantees and demonstrate our approach on simulated environments and a real clinical task. Empirically, the proposed algorithm exhibits good convergence properties and discovers meaningful near-equivalent actions. Our work provides theoretical, as well as practical, foundations for clinician/human-in-the-loop decision making, in which humans (e.g., clinicians, patients) can incorporate additional knowledge (e.g., side effects, patient preference) when selecting among near-equivalent actions. 
\end{abstract}

\section{Introduction}

In the standard RL setup, one aims to find an optimal policy, which identifies the action for each state that maximizes some discounted expected cumulative reward. However, in healthcare, the reward can be difficult to define; e.g., one might want to optimize for long-term quality of life vs. short-term stabilization of the symptoms (without treating the underlying disease process). Past work has augmented reward signals via reward shaping \citep{lizotte2012linear,raghu2017deepRL,nemati2016heparin}. Still, designing a single reward function that captures the goals and objectives across different individuals remains challenging. As a result, when applying RL in healthcare, survival is often used as the reward, since it represents a clear goal and is straightforward to measure \citep{raghu2017continuous,komorowski2018AI_Clinician,li2018ASTC}. 

While using survival as the reward signal can simplify the RL setup, we hypothesize that it induces many near-equivalent actions that could otherwise differ. For example, different doses of a drug might perform similarly in terms of keeping the patient alive, yet doses that are too large might cause severe side effects. In other cases, patients may opt for less invasive treatments, if they are likely to yield similar outcomes to more invasive ones. In such cases, learning a single best action and ignoring near-equivalent actions may be undesirable, as important considerations not captured through the reward signal can impact decision making. 

Thus, we consider the task of learning a ``set-valued policy'' (SVP), which returns a set of near-equivalent actions rather than a single optimal action. This allows for interaction between the clinician (or patient) and the decision support system (i.e., clinician/patient-in-the-loop). Such a setup provides clinicians/patients with an opportunity to incorporate additional information (e.g., patient preferences, adverse drug reactions, costs/availability of equipment) when choosing among near-equivalent actions. 

We study a particular formalization of SVPs introduced by \citet{fard2011non} where they consider the problem of computing SVPs (planning). In contrast to their exhaustive search based approach, which requires a model of the underlying environment, we propose a \textit{model-free} learning algorithm. The sequential nature of decisions makes learning such policies non-trivial, as near-equivalent actions should be not only similar in the short term (immediate reward), but also similar for all possible future trajectories (expected cumulative rewards). Our contributions are: 
\begin{itemize}[itemsep=-5pt, topsep=-2pt] %
    \item we propose a new algorithm based on temporal difference methods and a near-greedy heuristic for learning near-optimal set-valued policies,
    \item we investigate its convergence behavior and optimality using a worst-case analysis, providing theoretical guarantees in a directed acyclic graph (DAG) setting,
    \item we demonstrate the algorithm's behavior in both DAG and non-DAG synthetic environments, and
    \item on a clinical task, we demonstrate how the algorithm can help discover clinically meaningful near-equivalencies among treatment actions.
\end{itemize}

Our work provides both theoretical and practical foundations for learning near-optimal SVPs, and represents an important step towards clinician/human-in-the-loop decision support systems. Beyond applications in healthcare, this framework could also be applied to other domains involving human-machine-cooperative decision making, including intelligent tutoring systems and self-driving cars. The code to reproduce our experiments is available online\footnote{\url{https://gitlab.eecs.umich.edu/MLD3/RL-Set-Valued-Policy}}.
% \url{https://github.com/shengpu1126/RL-Set-Valued-Policy}

\section{Background}

We consider finite Markov decision processes (MDPs) defined by a tuple $(\mathcal{S}, \mathcal{A}, \mathcal{P}, \mathcal{R}, \gamma)$, where $\mathcal{S}$ is the set of states (finite or infinite), $\mathcal{A}$ is a finite set of actions, $\mathcal{P}: \mathcal{S} \times \mathcal{A} \times \mathcal{S} \rightarrow [0,1]$ defines the transition model with $p(s'|s,a)$ specifying the probability of moving from state $s$ to $s'$ given action $a$, and $\mathcal{R}: \mathcal{S} \times \mathcal{A} \rightarrow \mathbb{R}$ defines the reward function (can be stochastic in general) with $r(s,a)$ denoting the expected instantaneous reward obtained from taking action $a$ in state $s$, and discount factor $\gamma \in [0,1]$. 

\subsection{Set-Valued Policy}
In this work, we focus on set-valued policies (SVP), first formalized in \citet{fard2011non}\footnote{In \citet{fard2011non}, this was referred to as ``non-deterministic policy''; however, since the policy is indeed a deterministic mapping, we prefer the term ``set-valued policy'' to avoid confusion. }.

\begin{definition}
An SVP, $\pi$, is a function that maps each state to a non-empty subset of actions, $\pi: \mathcal{S} \rightarrow 2^\mathcal{A} \setminus \{\varnothing\}$. 
\label{def:svp}
\end{definition}

SVPs have been explored as a way to encode alternative choices \citep{fard2011non,lizotte2012linear} and to encapsulate an (approximately) exponential number of deterministic policies \mbox{\citep{lizotte2016multi}}. In particular, \citet{fard2011non} proposed a mixed-integer programming (MIP) formulation to find SVP solutions for a finite-horizon planning problem. However, their approach involves an exhaustive search with exponential complexity over the state and action spaces. In contrast, we propose a scalable model-free learning algorithm that does not require knowledge of the MDP. Moreover, our proposed approach allows the extent of near-optimality to be set \textit{a priori}, providing increased flexibility and optimality guarantees.

\subsection{Value Functions for SVPs}\label{sec:svp-value}
In contrast to the standard RL setting, here, a learning agent can suggest a set of actions. Thus, our notion of ``value'' must account for all possible policies consistent with the proposed set of actions, regardless of which proposed action is selected. To this end, we consider a worst-case analysis \citep{fard2011non}, where the value of a state $s$ is taken as the worst case over all the actions in the set $\pi(s)$. 
\begin{definition}\label{def:worst-case-vf}
The worst-case value functions of an SVP $\pi$ are defined as
\begin{align*}
    V^{\pi}(s) &= \min_{a\in \pi(s)} \{Q^{\pi}(s,a)\}, \\
    Q^{\pi}(s,a) &= r(s,a) + \gamma \mathbb{E}_{s'|s,a} \left[ V^{\pi}(s') \right].
\end{align*}
\end{definition}

\subsection{Near-Optimal SVPs}\label{sec:near-optimal}
We quantify the ``goodness'' of an SVP according to how far it is from the optimal value function, $V^*$. This gives rise to the definition of near-optimal SVPs. In some of the MDP literature, near-optimality has been formalized with an additive constraint: $\forall s, \ V^{\pi}(s) \geq V^*(s) - \epsilon$ \citep{satinder2002polytime}, which specifies a constant margin of sub-optimality across all states. This could lead to conservative action choices in some states, as $\epsilon$ is fixed but the magnitude of $V^{\pi}(s)$ and $V^{*}(s)$ could vary across different states. We argue that in healthcare settings this is not a suitable formalization, since at any point of decision making (in any particular state), the SVP should be near-optimal with respect to what we could achieve in that state; accepting the same value margin in a ``healthy'' state (larger value) vs. a ``sick'' state (smaller value) may not be desirable, because a margin that leads to acceptable outcomes in a ``healthy'' state can have a larger relative impact (perhaps devastating) in a ``sick'' state. We consider the additive near-optimality formalization as a baseline, providing the derivations of this setting in \cref{appx:additive}. However, given the limitations, we focus on a multiplicative constraint for near-optimality, which accounts for differences in values at different states. 

\begin{definition}\label{def:zeta-optimal}
An SVP $\pi$ is $\zeta$-optimal, with $\zeta \in [0,1]$, if
\[V^{\pi}(s) \geq (1-\zeta) V^*(s), \ \ \forall s \in \mathcal{S}.\]
\end{definition}
Here, $\zeta$ is a hyperparameter that defines the sub-optimality margin, quantifying the trade-off between action variety and optimality. 

\begin{remark}
Note that this definition requires $V^*(s)\geq 0, \forall s \in \mathcal{S}$. A sufficient (though not necessary) condition to ensure this is to enforce $r(s,a) \geq 0$. In experiments with clinical data (\cref{sec:experiments}), we discuss practical considerations to deal with problem domains having negative rewards. 
\end{remark}

As a na\"ive solution, one might construct $\pi$ as:
\[ \pi(s) = \{a : Q^{*}(s,a) \geq (1-\zeta) V^{*}(s)\}. \]

However, this construction fails to satisfy near-optimality. By using $Q^*(s,a) = r(s,a) + \gamma \mathbb{E}_{s'|s,a} V^*(s')$, the optimal Q-function, it assumes all future returns $V^*(s')$ are obtained following $\pi^*$ as opposed to $\pi$. During the roll-out of a policy, this fails to account for the fact that the future must be consistent with the policy. Alternatively, one might consider an exponentially large action space, $\tilde{\mathcal{A}} = 2^\mathcal{A} \setminus \{\varnothing\}$ and apply standard value-based methods to learn Q-values defined over $\mathcal{S} \times \tilde{\mathcal{A}}$, but analysis shows that this na\"ive approach defaults to the greedy optimal policy (\cref{appx:exp-action-space}). In the sections that follow, we propose a new approach to learn SVPs that does not violate the $\zeta$-optimal constraint in \cref{def:zeta-optimal} and focuses on value functions in the original action space $\mathcal{A}$.

\section{Methods}

We present an algorithm that jointly learns SVPs and their value functions. First, we provide two heuristics to construct near-optimal SVPs given the value functions. Using these heuristics as action selection strategies, we describe a variant of temporal difference (TD) learning and provide a theoretical analysis of its convergence and optimality. 

\subsection{Heuristics for Constructing Near-Optimal SVPs} \label{sec:construct}
To guarantee near-optimality, we start with a \textit{conservative} approach based on a loose lower bound of future returns. Then, we improve on this approach and propose a \textit{near-greedy} heuristic that leverages the learned policy. 

\textbf{Conservative.}\label{sec:conservative_svp}
Assuming the future follows a $\zeta$-optimal policy, one could construct $\pi$ as:
\begin{align}\label{eqn:conservative}
    &\pi(s) = \{a : \check{Q}_{\zeta}^{*}(s,a) \geq (1-\zeta)V^{*}(s)\},
\end{align}
where $\check{Q}_{\zeta}^{*}(s,a) = r(s,a) + \gamma (1-\zeta) \mathbb{E}_{s'|s,a}[V^*(s')] \leq Q^{*}(s,a)$ is the action-value function using a loose lower bound for near-optimal future returns. 

The conservative heuristic is consistent with the definition by \citet{fard2011non}; however, one key difference is that we provide an explicit way to construct a conservative SVP given an oracle for the optimal value function $V^*$. Though this will not violate the near-optimality bound, it may limit action diversity. 

To encourage action diversity while satisfying the near-optimality criteria, we can formulate a fixed-point equation for computing a near-optimal SVP. Recall that, in a standard RL setup, the optimal policy $\pi^*$ is the unique fixed-point solution to the following equation: $\pi^*(s) = \argmax_a Q^{\pi^*}(s,a)\; \forall s \in \mathcal{S}$ which applies a greedy action selection over optimal Q-values. For a near-optimal SVP, we seek the fixed-point solution to a similar equation with a near-greedy action selection. 

\textbf{Near-Greedy.} \label{sec:near-greedy_svp}
Consider the fixed-point solution to the following equation:
\begin{align}
    \pi(s) = \{a : Q^{\pi}(s,a) \geq (1-\zeta)V^{*}(s)\} \label{eqn:near-greedy}, 
\end{align}
where $Q^{\pi}(s,a)$ is the action-value function for policy $\pi$ as computed via \cref{def:worst-case-vf}. 

Depending on the dynamics of the MDP and the true optimal value function $V^*$, it is possible that no solution exists for \cref{eqn:near-greedy} (see \cref{appx:non-existence} for an example). Thus, for a general MDP, directly applying this heuristic might not generate the desired near-optimal SVP. In \cref{sec:theoretical}, we discuss the sufficient conditions for the existence and uniqueness of SVPs constructed according to \cref{eqn:near-greedy}. 

For the same optimality threshold $\zeta$, compared to the conservative SVP, the near-greedy SVP is more likely to contain more actions, due to its consideration of the policy-dependent worst-case future $V^{\pi}(s')$, rather than a loose lower bound $(1-\zeta)V^{*}(s')$, when deciding whether an action should be included.

\begin{figure}[h]
% \vspace{-1em}
\scalebox{0.75}{%
\begin{minipage}{1.33\linewidth}%
\begin{algorithm}[H]
  \caption{TD learning for near-greedy $\zeta$-optimal SVP}\label{alg:near-greedy}
  \begin{algorithmic}[1]
    \State \textbf{Input:} step size $\alpha_t \in (0,1]$, \newline
    \hspace*{2.75em} optimal value function $V^*$ where $V^*(s') \geq 0\; \forall s\in \mathcal{S}$, \newline 
    \hspace*{2.75em} sub-optimality margin $\zeta \in [0,1]$.
    \State \textbf{Initialize} $Q(s,a)=0$ for all $s\in \mathcal{S}, a\in\mathcal{A}$
    \For{each episode}
        \State Initialize $s$
        \For{each step}
            \State Choose action $a$ using an exploratory policy (e.g., $\epsilon$-greedy)
            \State Take action $a$, observe $r$, $s'$
            \State $\pi(s') = \{a' : Q(s',a') \geq (1-\zeta)V^{*}(s')\}$
            \State \textbf{if} $\pi(s') \neq \varnothing$
            \State \hspace*{1em} $\displaystyle Q(s,a) \gets Q(s,a) + \alpha_t [r + \gamma \min_{a'\in \pi(s')} Q(s', a') - Q(s,a)]$
            \State \textbf{else}
            \State \hspace*{1em} $\displaystyle Q(s,a) \gets Q(s,a) + \alpha_t [r + \gamma \:\:\, \max_{a'\in \mathcal{A}}\:\:\, Q(s', a') - Q(s,a)]$
            \State $s \gets s'$
        \EndFor
    \EndFor
  \end{algorithmic}
\end{algorithm}
\end{minipage}
}%
% \vspace{-1em}
\end{figure}

\subsection{Learning Near-Optimal SVPs}\label{sec:learn_svp}

As stated, a fixed-point solution to \cref{eqn:near-greedy} might not exist for a general MDP. However, the near-greedy construction can be used to devise a model-free learning algorithm in the TD-learning framework (\cref{alg:near-greedy}). Specifically, to compute the TD update target, we temporarily construct a candidate SVP $\pi(s')$ for the next state $s'$ using the current estimates of the Q-values (line 8). With $V^*(s') \geq 0$, when we have a non-empty set of near-optimal actions $\pi(s')$, we compute the update target by using the worst near-greedy action (line 10). Otherwise, when $\pi(s')$ is empty, we use the standard greedy target (line 12). Finally, the algorithm outputs estimates of $Q^{\pi}$, as well as the SVP $\pi$ constructed according to \cref{eqn:near-greedy}. Note that the algorithm requires the optimal value function $V^*$ as an input. In practice, one can run a separate Q-learning procedure to learn a good estimate of $Q^*$ and thus of $V^*$, or learn estimates of $Q^*$ and $Q^{\pi}$ simultaneously.

\subsection{Theoretical Analysis}\label{sec:theoretical}
Here, we discuss the sufficient conditions for which the near-greedy SVP exists and is unique and for which the near-greedy TD algorithm converges under the same conditions for Q-learning. For completeness, in \cref{appx:conservative}, we show that the conservative heuristic leads to a stochastic approximation algorithm that converges to a unique SVP solution for any MDP with non-negative rewards. 

\begin{theorem}\label{thm:greedy_exist}
The near-greedy $\zeta$-optimal SVP exists and is unique, if the MDP is a directed acyclic graph (DAG) with non-negative rewards. 
\end{theorem}

\begin{proof}
We show this by explicit construction of $\pi$. Note that the states in a DAG form a topological sort tree, where $s$ precedes $s'$ if and only if there is a transition from $s$ to $s'$. 

\begin{adjustwidth}{1em}{0pt}
\textit{Base case.} 
Consider every terminal state $s_\infty \in \mathcal{S}_{\infty}$, for which there is no immediate reward and no future time step starting from this state, i.e., $V^{\pi}(s_\infty)=0$. We set $\pi(s_\infty) = \mathcal{A}$ trivially. 

\textit{Inductive step.} 
Given a state $s$, consider its successor states $s'$. Assuming we have every $\pi(s')$ and their associated value functions satisfying $V^{\pi}(s') \geq (1-\zeta)V^*(s')$, we calculate $Q^{\pi}(s,a)$ for all $a\in \mathcal{A}$:

\vspace{-2em}
{\small\begin{align*}
    Q^{\pi}(s,a) &= r(s,a) + \gamma \mathbb{E}_{s'|s,a} V^{\pi}(s') \\
    &\geq r(s,a) + \gamma \mathbb{E}_{s'|s,a} (1-\zeta) V^{*}(s') \\
    &= \zeta r(s,a) + (1-\zeta) [r(s,a) + \gamma \mathbb{E}_{s'|s,a} V^{*}(s')] \\
    &= \zeta r(s,a) + (1-\zeta) Q^{*}(s,a) \\
    &\geq (1-\zeta) Q^{*}(s,a) .
\end{align*}}%
Using $Q^{\pi}(s,a)$, we construct $\pi(s)$ according to \cref{eqn:near-greedy}. Importantly, $\pi(s)$ is non-empty because the optimal action $a^*$ is always included. Given $\pi(s)$, $V^{\pi}(s) = \min_{a\in\pi(s)} Q^{\pi}(s,a) \geq (1-\zeta) V^{*}(s)$ satisfies the near-optimality constraint by construction.
\end{adjustwidth}

Enumerating the states in reverse topological order starting from the terminal states, we follow this process until we have $\pi(s)$ for all $s \in \mathcal{S}$. Each step of the construction process is unique, hence overall the policy is unique. 
\end{proof}%
Since the proof relies on the topological ordering of states in a DAG MDP, it holds when we are only given $V^*$, as we can obtain $V^{\pi}$ from the base case for terminal states and from the inductive step for non-terminal states. The above theorem provides a \textit{sufficient} condition for a near-greedy SVP to exist: the environment is a DAG with non-negative rewards. An MDP with cycles and/or negative rewards may still have a near-greedy $\zeta$-optimal SVP, depending on the sub-optimality margin $\zeta$ and the MDP parameters. 

Similarly, we can show the following convergence result for \cref{alg:near-greedy} (proof provided in \cref{appx:near-greedy-convergence-proof}). 

\begin{theorem}\label{thm:greedy_converge}
The near-greedy TD algorithm converges to the unique solution if the MDP is a DAG with non-negative rewards, under the same convergence conditions for regular TD learning: rewards have bounded variance, each $(s,a)$ is updated infinitely many times, $\sum_{t} \alpha_{t}=\infty$, $\sum_{t} \alpha_{t}^{2}<\infty$. 
\end{theorem}

\begin{remark}
The near-greedy heuristic from \cref{sec:near-greedy_svp} can be used as the policy improvement step (replacing the greedy action selection) in any value-based generalized policy iteration algorithm. For instance, when we are given the MDP model, one can derive a version of near-greedy value iteration with similar theoretical guarantees on DAG environments. In that case, it is most efficient to learn the value functions in reverse topological order of the states (as per the proof). However, in the more general case (where the underlying MDP is unknown, or when we have a continuous state space), we require a different approach. In particular, here we present an algorithm based on TD learning. Though the theoretical analyses only hold for DAG environments, in our experiments we will demonstrate the algorithm's behavior in the more general setting of non-DAG environments. 
\end{remark}

\section{Experiments} \label{sec:experiments}
We consider a set of experiments to i) demonstrate the ability of the proposed algorithm to learn SVPs for different values of $\zeta$ and ii) characterize the algorithm's empirical convergence behavior. Throughout, we compare to alternative approaches to learning SVPs (described in \cref{sec:baseline}).

\begin{figure}[h]
\begin{tabular}{m{0.48\hsize}m{0.48\hsize}}
\begin{subfigure}[b]{\linewidth}
\centering \includegraphics[width=\linewidth]{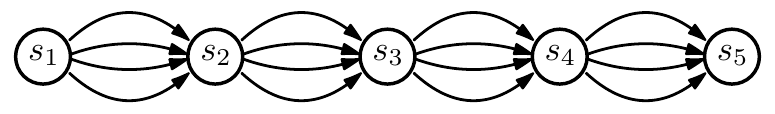}
\caption{\textsf{Chain-5}}\label{fig:chain5}
\end{subfigure}
\par\bigskip
\begin{subfigure}[b]{\linewidth}
\centering \includegraphics[width=\linewidth]{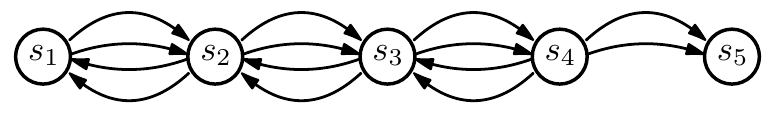}
\caption{\textsf{CyclicChain-5}}\label{fig:cycle5}
\end{subfigure}
&
\centering
\begin{subfigure}[b]{\linewidth}
\centering \includegraphics[width=0.5\linewidth]{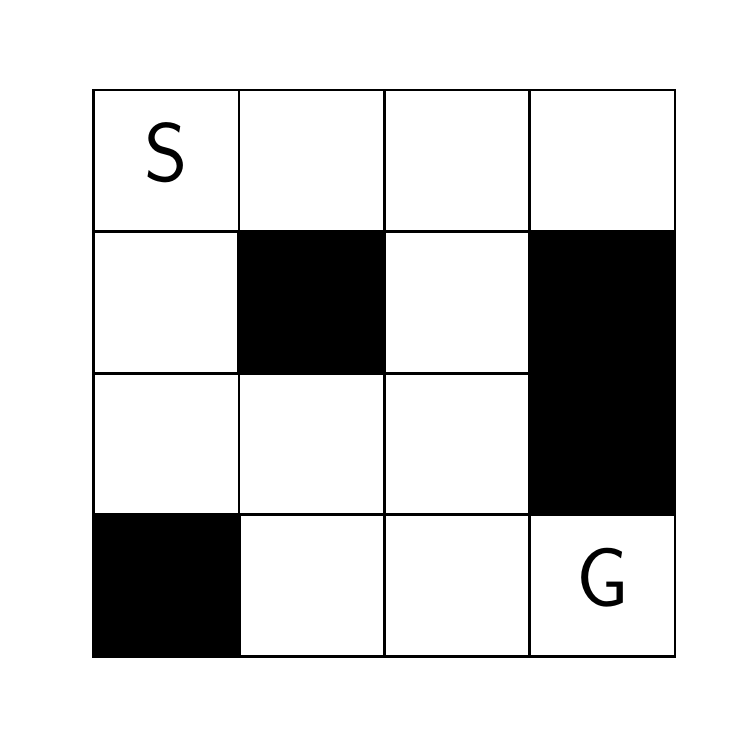}
\caption{\textsf{FrozenLake-4x4}}\label{fig:lake}
\end{subfigure}
\end{tabular}
\caption{Synthetic environments include (a) a simple DAG, and (b)(c) two non-DAG environments. }\label{fig:synth_env}
\end{figure}

\subsection{Environments} \label{sec:environments}
We consider both synthetic and real environments. 

\textsf{Chain-$k$} (\cref{fig:chain5}). First, as a sanity check, we consider a DAG with $k$ sequentially connected states, with $s_1$ the starting state and $s_k$ the terminal state. There are four actions at each state $s_i$ that transition to $s_{i+1}$, and intermediate rewards $r_1, r_2, r_3, r_4 \in \{0, 0.01, 0.02, 0.03, 0.04, 0.05\}$ that are predetermined, but randomly assigned. Transitions reaching the terminal state result in an additional reward of $+1$. In this simple setting, we test the ability of the proposed algorithm to identify near-equivalent actions. 

Next, we investigate the empirical performance of the algorithm in two non-DAG settings: \textsf{CyclicChain-$k$} and \textsf{FrozenLake}. These represent more complex environments, testing the generalizability of the proposed algorithm. 

% \begin{adjustwidth}{1em}{0pt}
\textsf{CyclicChain-$k$} (\cref{fig:cycle5}): 
An extension of \textsf{RandomWalk} \citep{sutton2018RL}, similar to \textsf{Chain-$k$}, except that of the four actions from $s_i$, two lead to $s_{i-1}$ and two lead to $s_{i+1}$. We set $\gamma < 1$ to encourage the agent to reach the terminal state quickly and avoid cycling. 

\textsf{FrozenLake} (\cref{fig:lake}). This is a standard discrete space path-finding problem from OpenAI Gym \citep{openai-gym}. The agent controls the movement of a character in a grid world. Some tiles of the grid lead the agent to fall into the water. The agent is rewarded $+1$ for finding a path to a goal tile. We used the standard $4 \times 4$ and $8\times 8$ maps. Multiple paths exist from the starting tile to the goal tile; these paths take the same number of steps and are thus equivalent. To introduce near-equivalent actions, we added a small reward to all actions from every non-terminal state so that the four actions vary slightly in value. For the $4 \times 4$ map, we added $r_i \in \{0.01, 0.02, 0.03, 0.04\}$, while for the $8 \times 8$ map, we added $r_i \in \{0.001, 0.002, 0.003, 0.004\}$. %\footnote{\url{https://gym.openai.com/envs/FrozenLake-v0/}}
% \end{adjustwidth}

Finally, we explore a challenging clinical task using observational patient data. In contrast to the environments above, in this setting, i) we do not have access to the underlying MDP, ii) transitions are stochastic, and iii) the reward signal is sparse. 

\textsf{MIMIC-sepsis}. This is a previously studied RL task in the healthcare domain, in which the goal is to learn optimal treatment strategies for patients with sepsis in the ICU. Sepsis is defined as severe infection leading to life-threatening organ dysfunction and is one of the leading causes of mortality in hospitals \citep{gotts2016sepsis, liu2014hospital}. While a lot of work has focused on sepsis prediction \citep{henry2015targeted, reyna2019early, bedoya2020machine}, the management of intravenous (IV) fluids and vasopressors in sepsis treatment still represents a key clinical challenge \citep{byrne2017fluid}. We based our analysis, in part, on the setup described in \citet{komorowski2018AI_Clinician}, and used the same data and preprocessing steps, outlined below. Patient data are 48-dimensional time series (\cref{appx:features}) collected at 4h intervals, consisting of measurements from 24h preceding until 48h following the time of sepsis onset. Similar to in \citet{komorowski2018AI_Clinician}, we consider 750 discrete health states obtained from clustering the training set using k-means. Additionally, 2 terminal states are added to represent death and discharge. Actions pertain to treatment decisions in each 4h interval, representing total volume of IV fluids and amount of vasopressors administered. Though we consider the same number of discrete actions (25), the corresponding IV fluid doses differ substantially from those considered by \citet{komorowski2018AI_Clinician}. Specifically, we updated the five levels of IV fluids to use the following bins [0, $<$500mL, 500mL$-$1L, 1$-$2L, $>$2L] to represent more clinically relevant fluid boluses. We made this modification based on feedback from a critical care physician. Furthermore, the actions available at each state, $A(s)$, are restricted to those observed $\geq 5$ times (in training data; the most frequent action is used if no action occurs $\geq 5$ times). Rewards are sparse and only assigned at the end of each trajectory: $+100$ for survival (and discharge), $-100$ for in-hospital death; all intermediate rewards are $0$. In applying \cref{alg:near-greedy}, when $V^*(s)<0$ (due to negative rewards), we fall back to the greedy update target in line 12. $\gamma$ is set to $0.99$ to place nearly as much importance on late deaths as early deaths. Applying the specified inclusion and exclusion criteria \citep{komorowski2018AI_Clinician} to the MIMIC-III database \citep{mimic3}, we identified a cohort of 20,940 patients with sepsis (\cref{tab:sepsis}). The cohort was split into 70\% training, 10\% validation and 20\% test. 
% \vspace{-1em}
\begin{table}[h]
    \caption{Cohort statistics of \textsf{MIMIC-sepsis} patients.}
    \label{tab:sepsis}
    \centering
\scalebox{0.8}{
\begin{tabular}{lrrrr}
\toprule
      &     N & \% Female & Mean Age & \makecell[r]{\small Mean Hours\\ \small in ICU} \\
\midrule
     Survivors & 18,057 &    44.3\% &     64 &         56.6 \\
 Non-survivors &  2,883 &    42.7\% &     69 &         60.9 \\
\bottomrule
\end{tabular}}
\end{table}

In an effort to explore the stability of training with function approximation, instead of the tabular lookup algorithm, we implemented a linear approximator for the Q-function with a one-hot state feature encoding (based on the clustering results), where we aimed to minimize the mean squared TD error \citep{sutton2018RL}. This setup allows the implementation to be readily extended to any linear (or possibly non-linear) approximation of the Q-function.

\subsection{Baselines} \label{sec:baseline}
We compare our proposed SVP learning algorithm based on a near-greedy heuristic to one based on a conservative heuristic. In addition, we compare to three other baselines:
\begin{itemize}
\item \textit{$Q^*$-based} (\cref{sec:near-optimal}). This approach assumes that the future follows an optimal policy, which can result in $\pi(s)$ including arbitrarily bad actions, especially in complex environments with long horizons.
\item \textit{$Q$-based}. In \cref{alg:near-greedy}, replace the policy construction step (line 8) of update rule with:
\begin{equation*}
    \pi(s) = \{a : Q(s,a) \geq (1-\zeta)V(s)\}, 
\end{equation*}
where $V(s) = \max_{a\in \mathcal{A}} Q(s,a)$ replaces $V^*(s)$. This is similar to the near-greedy algorithm, except that it uses $V$, the worst-case value of the SVP as the baseline for near-optimality, instead of $V^*$ in the definition of $\zeta$-optimal, so the optimality constraint may be violated. 
\item \textit{Additive}. After learning $Q^*$, we construct $\pi$ following the additive constraint definition in \cref{appx:additive}: 
\begin{equation*}
    \pi(s) = \{a : Q^{\pi}(s,a) \geq V^*(s) - \epsilon\}, 
\end{equation*}
where $\epsilon = \zeta(1-\gamma)\|V^*\|_\infty$. Using this selection criteria, we can guarantee that $\forall s:$ $V^\pi(s) \ge V^*(s) - \zeta \|V^*\|_\infty$.
\end{itemize}

In addition to comparing to these model-free alternatives, we compare to the MIP approach proposed by \citet{fard2011non} in \cref{appx:MIP-baseline} (their method requires knowledge of the MDP model). Another alternative involves first learning a (parameterized) probabilistic policy $\pi_{\theta}(a|s)$ and then converting it to an SVP by thresholding based on the action probabilities, $\pi(s) = \{a: \pi_{\theta}(a|s) \geq \tau \}$. Relevant methods to learn probabilistic policies include: diversity-inducing policy gradient \citep{masood2019DIPG} and maximum entropy approaches (e.g., soft actor-critic \citep{haarnoja2018SAC}). However, such approaches rely on probabilistic assumptions when computing Q-values and require one to choose a probability threshold $\tau$, which does not align with our definition of near-optimality.

\subsection{Evaluation}
We perform qualitative evaluations by inspecting the learned sets of near-equivalent actions. In particular, we visualize the different routes induced by learned SVPs in the \textsf{FrozenLake} environment, as well as near-equivalent treatment actions in \textsf{MIMIC-sepsis}. 

To evaluate the quality of learned policies, we use standard Q-learning to establish the optimal Q-function $Q^*$ as the baseline for deciding near-optimality. For synthetic environments, we characterize SVPs in terms of:
\begin{itemize}
    \item Average policy size: the expected number of actions to which the SVP maps a state, $\frac{1}{|\mathcal{S}|} {\scriptstyle \sum\limits_{s\in \mathcal{S}}} |\pi(s)|$. A larger average policy size means more actions are considered near-equivalent within the sub-optimality margin $\zeta$. 
    \item Worst-case near-optimality: we consider the largest deviation from $V^*$ among all states, $1-{\scriptstyle \min\limits_{s\in\mathcal{S}}} \frac{V^{\pi}(s)}{V^*(s)}$. This represents to what extent, in the worst-case, optimality is sacrificed in exchange for more choice. The value of each state is found by running a modified version of policy evaluation for the returned SVP (see \cref{appx:policy-evaluation} for pseudo-code and convergence result). 
\end{itemize}

Evaluating the learned policies on the real data task \textsf{MIMIC-sepsis} presents significant challenges due to the present limitations of off-policy evaluation methods \citep{imbens2015causal,thomas2016data,gottesman2018evaluating}. However, our main focus is on testing whether or not we can learn reasonable near-equivalent actions, rather than on learning the best policy for sepsis treatment. Still, for completeness, we evaluated the quality of the learned policies using multiple off-policy evaluation estimators and considered a non-deterministic variation of the learned policies \citep{gottesman2018evaluating}. For both the estimated optimal policy and the learned SVPs, we follow \citet{komorowski2018AI_Clinician} and evaluate softened policies: 99\% probability is distributed among actions in the recommended set; the remaining 1\% is distributed to non-suggested actions. This allows us to use nearly all sample trajectories in the test set, maintaining a large effective sample size. We applied two off-policy evaluation methods, doubly-robust estimator (DR) \citep{jiang2015doubly} and weighted doubly robust estimator (WDR) \citep{thomas2016data}, to evaluate the softened policies derived from the learned policies, computing empirical error bars based on 1,000 bootstraps of the test set. 

\section{Results}
Across environments, our proposed approach is able to learn SVPs and discover near-equivalent actions. Empirically, we observe good convergence with near-optimal behavior under reasonable settings of $\gamma$ and $\zeta$ even within non-DAG environments. The near-greedy algorithm outperforms the baselines, while the learned SVPs induce diverse behavior and meaningful alternative treatment recommendations.

\begin{figure}[h]
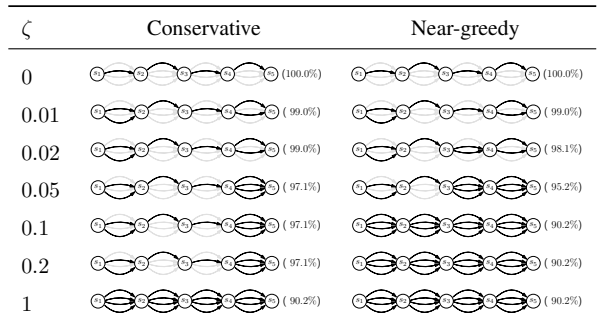

\begin{center}
\renewcommand*{\arraystretch}{1.2}
\scalebox{0.825}{
\begin{tabular}{lcc}
\toprule
$\zeta$ & Conservative & Near-greedy \\
\midrule
$0$ 
& \adjustimage{scale=0.4,valign=m,padding=0em 2pt}{{fig/chain_5_conservative_zeta_0.0}.pdf} 
& \adjustimage{scale=0.4,valign=m,padding=0em 2pt}{{fig/chain_5_near-greedy_zeta_0.0}.pdf} 
\\
$0.01$ 
& \adjustimage{scale=0.4,valign=m,padding=0em 2pt}{{fig/chain_5_conservative_zeta_0.01}.pdf}
& \adjustimage{scale=0.4,valign=m,padding=0em 2pt}{{fig/chain_5_near-greedy_zeta_0.01}.pdf}
\\
$0.02$ 
& \adjustimage{scale=0.4,valign=m,padding=0em 2pt}{{fig/chain_5_conservative_zeta_0.02}.pdf} 
& \adjustimage{scale=0.4,valign=m,padding=0em 2pt}{{fig/chain_5_near-greedy_zeta_0.02}.pdf}
\\
$0.05$ 
& \adjustimage{scale=0.4,valign=m,padding=0em 2pt}{{fig/chain_5_conservative_zeta_0.05}.pdf} 
& \adjustimage{scale=0.4,valign=m,padding=0em 2pt}{{fig/chain_5_near-greedy_zeta_0.05}.pdf}
\\
$0.1$ 
& \adjustimage{scale=0.4,valign=m,padding=0em 2pt}{{fig/chain_5_conservative_zeta_0.1}.pdf} 
& \adjustimage{scale=0.4,valign=m,padding=0em 2pt}{{fig/chain_5_near-greedy_zeta_0.1}.pdf}
\\
$0.2$ 
& \adjustimage{scale=0.4,valign=m,padding=0em 2pt}{{fig/chain_5_conservative_zeta_0.2}.pdf} 
& \adjustimage{scale=0.4,valign=m,padding=0em 2pt}{{fig/chain_5_near-greedy_zeta_0.2}.pdf}
\\
$1$ 
& \adjustimage{scale=0.4,valign=m,padding=0em 2pt}{{fig/chain_5_conservative_zeta_1.0}.pdf} 
& \adjustimage{scale=0.4,valign=m,padding=0em 2pt}{{fig/chain_5_near-greedy_zeta_1.0}.pdf}
\\
\bottomrule
\end{tabular}
}
\end{center}
\caption{SVPs learned by the near-greedy and conservative algorithms on \textsf{Chain-5} at different $\zeta$s. Parenthesized percentages denote the worst-case near-optimality.} \label{fig:chain5-policy}
\end{figure}

\begin{figure}[h]
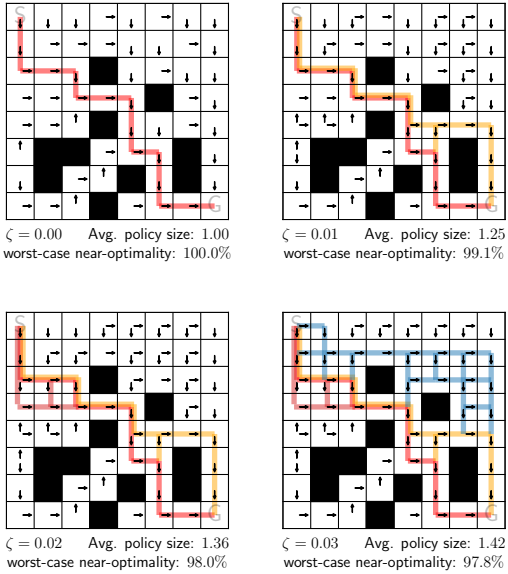

\centering
\begin{tabular}{cc}
\adjustimage{scale=0.5,valign=m,padding=0em 0pt}{{fig_lake/lake8x8.zeta=0.0}.pdf} 
& \adjustimage{scale=0.5,valign=m,padding=0em 0pt}{{fig_lake/lake8x8.zeta=0.01}.pdf}
\\
\\
\adjustimage{scale=0.5,valign=m,padding=0em 0pt}{{fig_lake/lake8x8.zeta=0.02}.pdf}
& \adjustimage{scale=0.5,valign=m,padding=0em 0pt}{{fig_lake/lake8x8.zeta=0.03}.pdf}
\end{tabular}
\caption{On the \textsf{FrozenLake-8x8} environment, at different levels of near-optimality, $\zeta$, the near-greedy algorithm learns different but near-equivalent routes to the goal, \textsf{G}. The learned SVPs conform to the near-optimality margin in the worst-case for the range of $\zeta$ values shown. } \label{fig:lake-routes}
\end{figure}

\subsection{Applied to Synthetic Data}\label{sec:algo-behavior}

In the simple DAG environment, \textsf{Chain-5} with $\gamma=0.9$, we recover near-equivalent actions. When $\zeta=0.01$ is small, $\pi(s_1)$ and $\pi(s_4)$ for both the near-greedy and conservative heuristics include two actions (\cref{fig:chain5-policy}). As $\zeta$ increases, so does the number of actions, resulting in diversity and choice among near-equivalent actions. As expected, for the same $\zeta$, the conservative approach contains fewer actions compared to the near-greedy SVP. Moreover, for a wide range of $\zeta \geq 0.05$, the conservative approach underestimates the near-optimality, failing to produce a diversity of choice. 

On \textsf{FrozenLake-8x8} with $\gamma=0.9$, our proposed near-greedy algorithm discovers different routes to the goal tile (\cref{fig:lake-routes}). Due to subtle differences in instantaneous rewards (see \cref{sec:environments}), these routes are not exactly the same, but are near-equivalent. 

\textbf{How does the algorithm perform empirically on non-DAGs?}
Here, we investigate the empirical performance of the near-greedy algorithm in non-DAG environments, namely, \textsf{CyclicChain-5} and \textsf{FrozenLake-4x4}. We substituted the near-greedy heuristic as the policy improvement step in the value iteration algorithm and monitor whether the SVP (derived from the learned value estimate $V$) stablizes towards the end of training. For small $\gamma$ and small $\zeta$, the near-greedy algorithm demonstrates good convergence (\cref{fig:converge}). Under settings with large $\gamma$ or large $\zeta$ (close to $1$), the algorithm displays instability. For large $\gamma$, the effective horizon is longer and the worst-case value function must account for a longer future. When $\zeta$ is large, we permit more sub-optimal actions, and the local effect of adding an action to $\pi(s)$ could be so large that the action is no longer near-optimal, resulting in oscillating behavior and non-convergence. Notably, for the region of parameter values where the algorithm converges, the learned SVPs are non-trivial solutions (they include more than one action for certain states), as indicated by lighter colors in \cref{fig:converge}. Empirically, for many operating regimes with reasonable settings of $\gamma$ and $\zeta$ (e.g., small $\zeta$ for near-optimality), the near-greedy algorithm exhibits good convergence behavior. 

\begin{figure}[h]
\begin{subfigure}[b]{.5\linewidth}
\centering \includegraphics[width=0.8\linewidth, trim=0 0 75 0, clip]{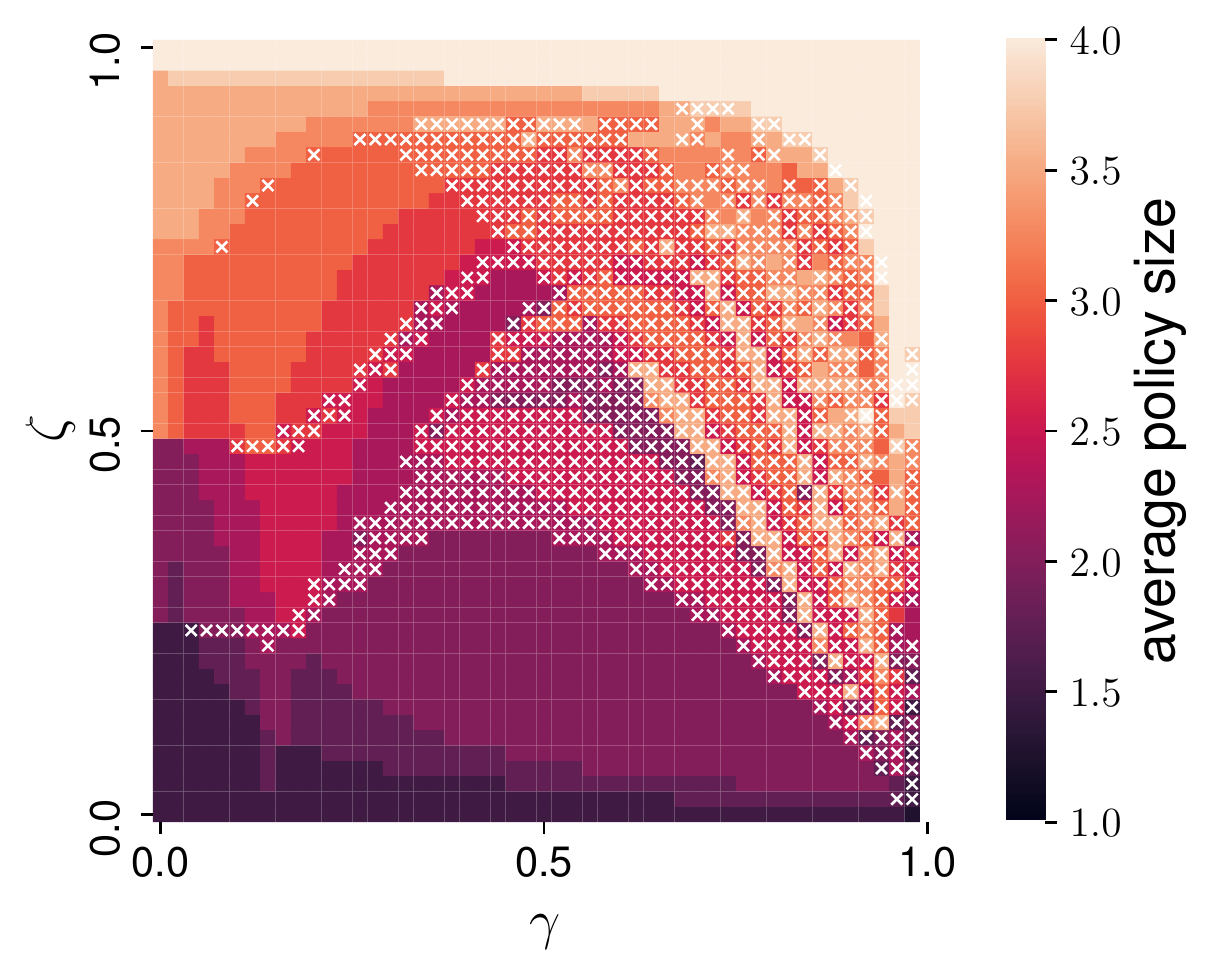}
\caption{\textsf{CyclicChain-5}}\label{fig:converge-a}
\end{subfigure}%
\begin{subfigure}[b]{.5\linewidth}
\centering \includegraphics[width=\linewidth]{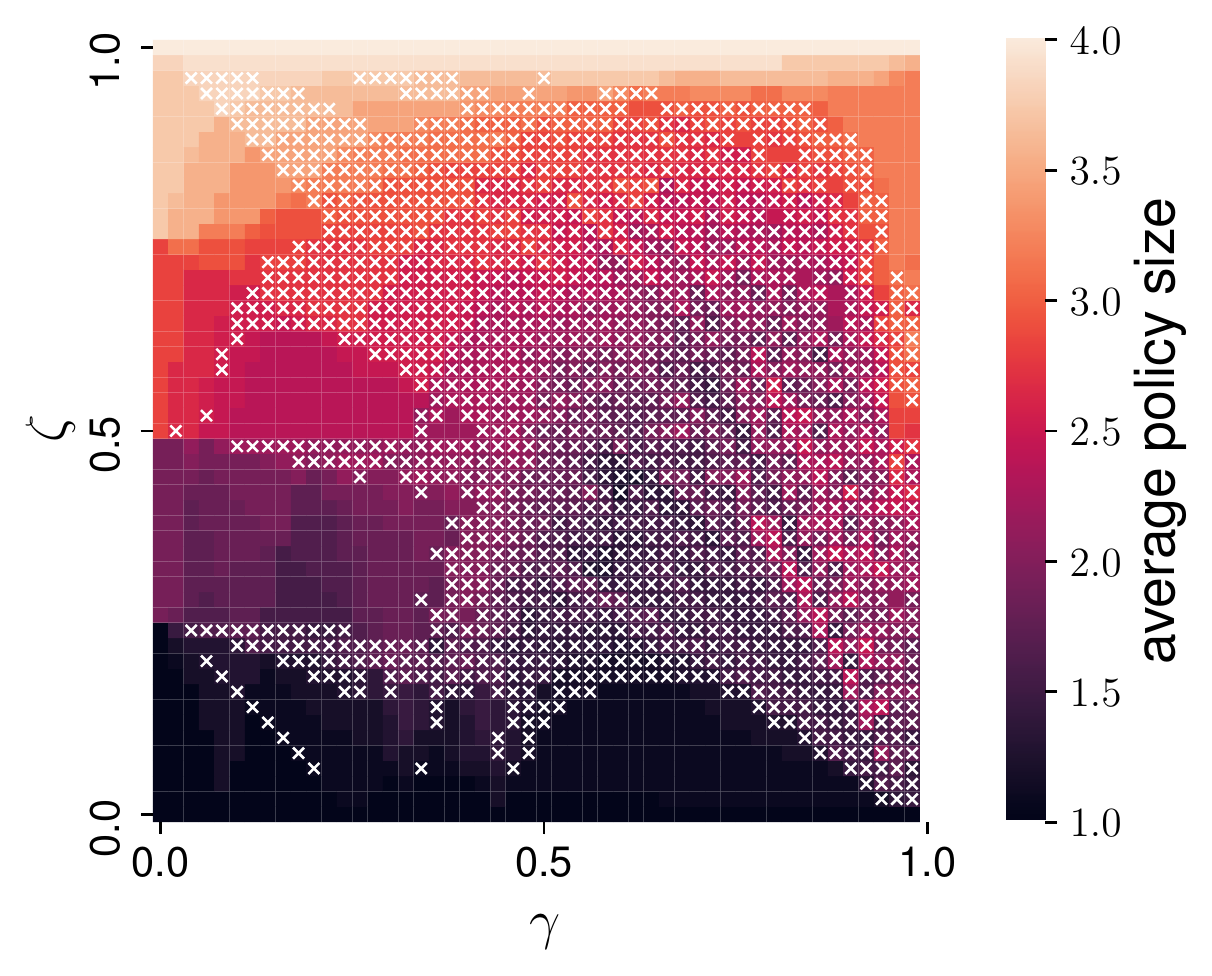}
\caption{\textsf{FrozenLake-4x4}}\label{fig:converge-b}
\end{subfigure}
\caption{Empirical behavior of the near-greedy value iteration algorithm, visualized as a heatmap of policy size at the end of training. White $\times$'s indicate non-convergence (i.e., the SVP derived from the learned $V$ does not stablize towards the end of training). In regions of the parameter space where the algorithm converges, we are able to discover non-trivial solutions (learned policies include more than one action for some states). }\label{fig:converge}
\end{figure}

\begin{figure}[h]
\begin{subfigure}[b]{.5\linewidth}
\centering \includegraphics[width=\linewidth]{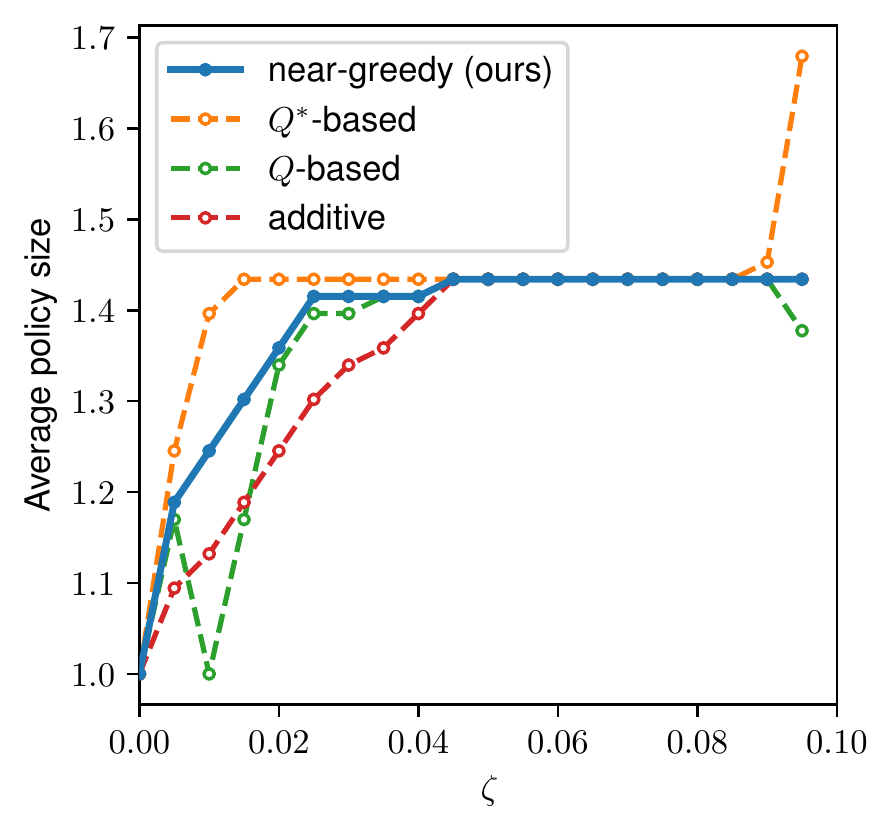}
\caption{}\label{fig:avg_pol_size}
\end{subfigure}%
\begin{subfigure}[b]{.5125\linewidth}
\centering \includegraphics[width=\linewidth]{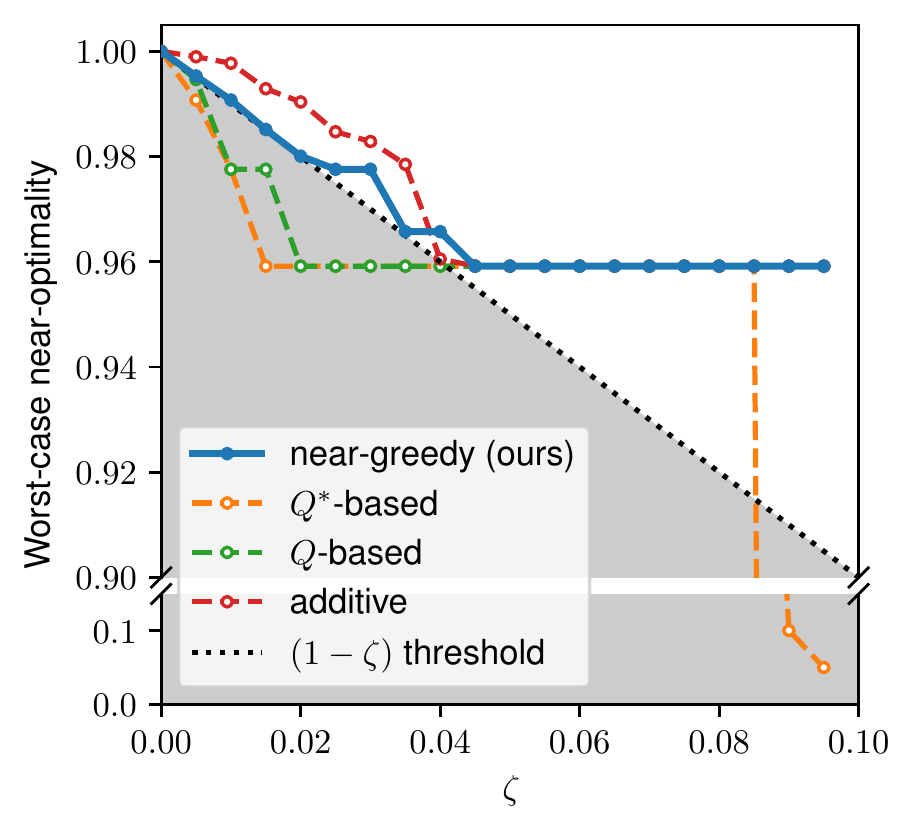}
\caption{}\label{fig:worst_near_optimal}
\end{subfigure}%
\caption{Comparison of average policy size and worst-case near-optimality of SVPs for different values of $\zeta$ on \textsf{FrozenLake-8x8}. }\label{fig:baseline}
\end{figure}

\textbf{How does the proposed algorithm compare to the alternatives?}
Using the \textsf{FrozenLake-8x8} environment with $\gamma=0.9$, we compare the proposed approach to the three baselines in terms of average policy size (\cref{fig:avg_pol_size}) and worst-case near-optimality (\cref{fig:worst_near_optimal}). We focused on small values of $\zeta$, given the goal is to learn near-optimal behavior. For this particular environment, the conservative algorithm completely fails to discover near-equivalent actions. As $\zeta$ increases, the near-greedy finds solutions of comparable policy size to the baselines. However, holistically, near-greedy performs best because it respects the predefined optimality threshold (lies to the upper right of the shaded region) while maximizing the policy size. For $0< \zeta < 0.04$, both \textit{$Q^*$-based} and \textit{$Q$-based} baselines violate the predefined near-optimality threshold while only yielding marginally larger policy sizes. For $\zeta \geq 0.9$, the \textit{$Q^*$-based} approach finds an arbitrarily bad solution with a worst-case near-optimality of $< 10\%$. This simple environment illustrates the importance of considering a worst-case future during learning and accounting for the inter-dependence of including/excluding actions at different states. On this environment, we used $\|V^*\|_\infty$ to define the margin $\epsilon$ for \textit{additive}, and the solution includes fewer actions and does not make good use of the allowable sub-optimality compared to \textit{near-greedy} for $\zeta \leq 0.04$ (for other values of $\zeta$ it leads to the same behavior as \textit{near-greedy}).

\subsection{Applied to Real Clinical Data}

For the \textsf{MIMIC-sepsis} task, we apply our proposed approach with a linear function approximator by first running Q-learning to estimate $Q^*$, and then running the near-greedy algorithm with various values of $\zeta$. During training, each episode is generated by randomly sampling a patient trajectory from the training set (with replacement). Given the complexity of this environment, to improve convergence, we exponentially decay the step size $\alpha$ every $1,000$ episodes. We train the RL agent for $1,000,000$ episodes, after which TD errors stabilize and the estimated Q-values reach plateaus. We consider $\zeta$ values from $0.0$ to $0.1$ to obtain SVPs that are near-optimal. 

For illustrative purposes, we focus on an intermediate value of $\zeta=0.05$, where $50\%$ of the states are mapped to more than one near-optimal action. On the test set, the estimated value of the softened policy derived from $\pi$ is within approximately $5$--$8\%$ of the estimated optimal value (\cref{tab:OPE}), which closely matches with the optimality margin $\zeta=0.05$, given the complexity of this task and the noise in the data. 

To better understand the near-equivalence relationships among actions, for each `optimal' action, we count the number of times every other action is suggested by the learned SVP. These counts are aggregated over all states in the test set and normalized by the maximum count (red numbers next to each cell), visualized as a heatmap in \cref{fig:mimic_actions}. In interpreting these results, we worked closely with our co-author, MWS, a critical care physician who frequently treats patients with sepsis. 

By far, the most common actions correspond to IV fluids alone (region A), i.e., no vasopressors are used. We observe that actions with similar amounts of IV fluids are often considered near-equivalent. Since the differences in fluid volumes across these actions (adjacent cells) are non-trivial, we hypothesize that grouping those actions together makes sense for some (but not all) patient states. Similarly, in region B where actions correspond to a low dose of IV fluids with various amounts of vasopressors, actions with similar vasopressor doses are often considered near-equivalent. We observe that the near-equivalent action sets are not always contiguous across IV/vasopressor doses. This is due, in part, to the fact that we restricted the action space $A(s)$ for each state to only those that were taken frequently. Interestingly, in region C, actions with very high vasopressor and IV fluid doses are considered near-equivalent to the null action (i.e., do nothing). Typically, only the sickest patients are prescribed the highest doses. The near-equivalence of this action with the null action may be due to the fact that these patients are so critically ill that doing `everything' or `nothing' leads to a similar outcome. 

\begin{table}[t]
    \caption{Value estimates of the learned SVPs on \textsf{MIMIC-sepsis}, with standard errors from 1,000 bootstraps of the test set. Effective sample size (measured as: usable trajectories / total number of trajectories, of the test set): 2,801$/$4,189. }
    \label{tab:OPE}
    \centering
    \vspace{0.15em}
\scalebox{0.75}{
\begin{tabular}{lcc}
\toprule
 & \multicolumn{2}{c}{observed returns of test set} \\
clinician & \multicolumn{2}{c}{73.1 $\pm$ .97} \\
\midrule
$\zeta$  & DR & WDR \\
\midrule
0, $\pi^*$ & 91.6 $\pm$ .31 & 92.2 $\pm$ .23 \\
0.05 & 84.3 $\pm$ .63 & 89.7 $\pm$ .32 \\
\bottomrule
\end{tabular}}
\end{table}

\begin{figure}[t!]
    \centering
    \includegraphics[width=\linewidth]{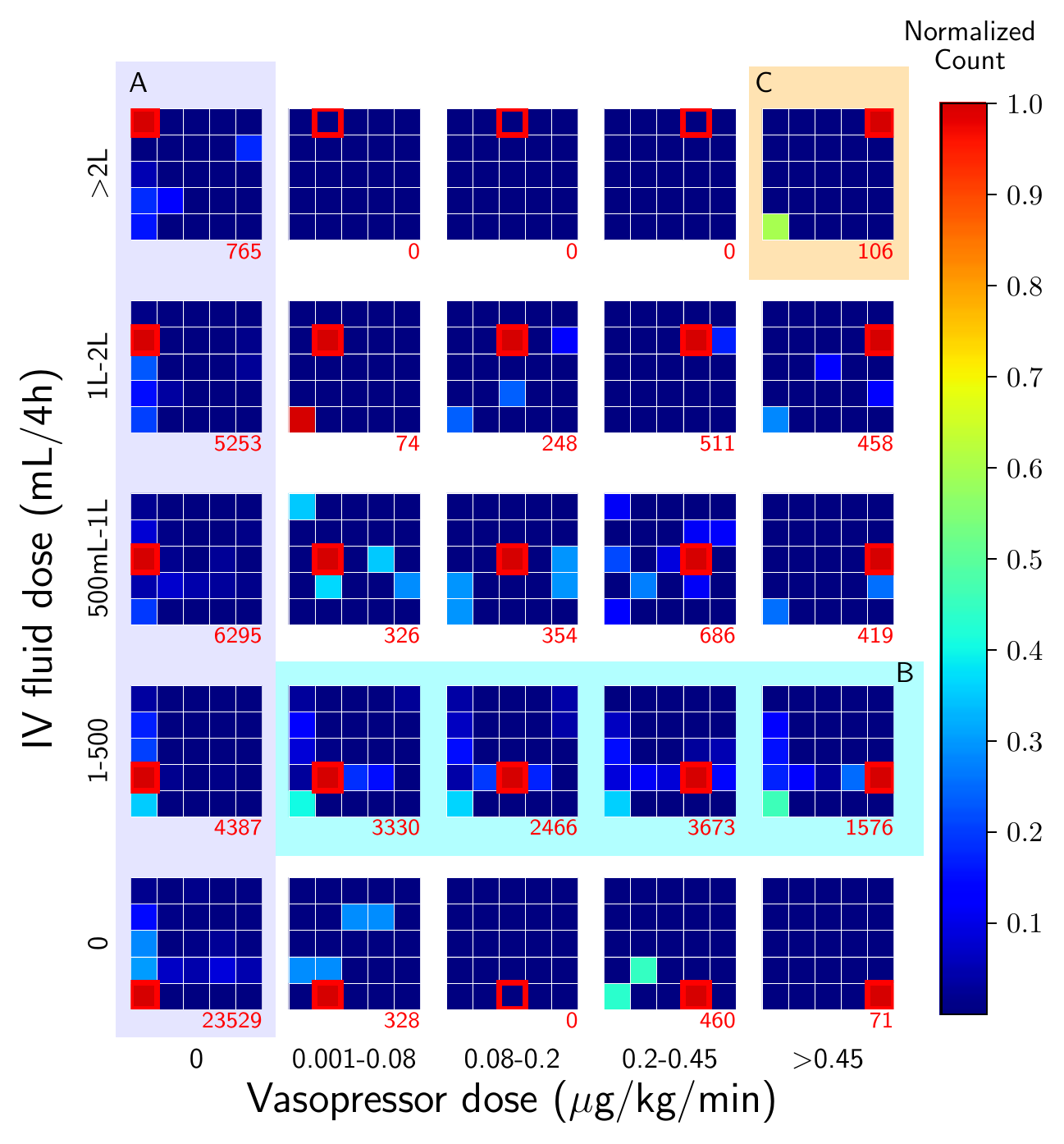}
    \caption{Near-equivalent relationships among actions in SVPs on the \textsf{MIMIC-sepsis} task with $\zeta=0.05$. Both the outer grid and the inner grid contain the 25 actions, corresponding to 5 doses of vasopressor and 5 doses of IV fluids. Red numbers indicate the frequency that each action is recommended as optimal. }
    \label{fig:mimic_actions}
\end{figure}

\begin{figure}[t!]
    \centering
    \includegraphics[width=0.6\linewidth]{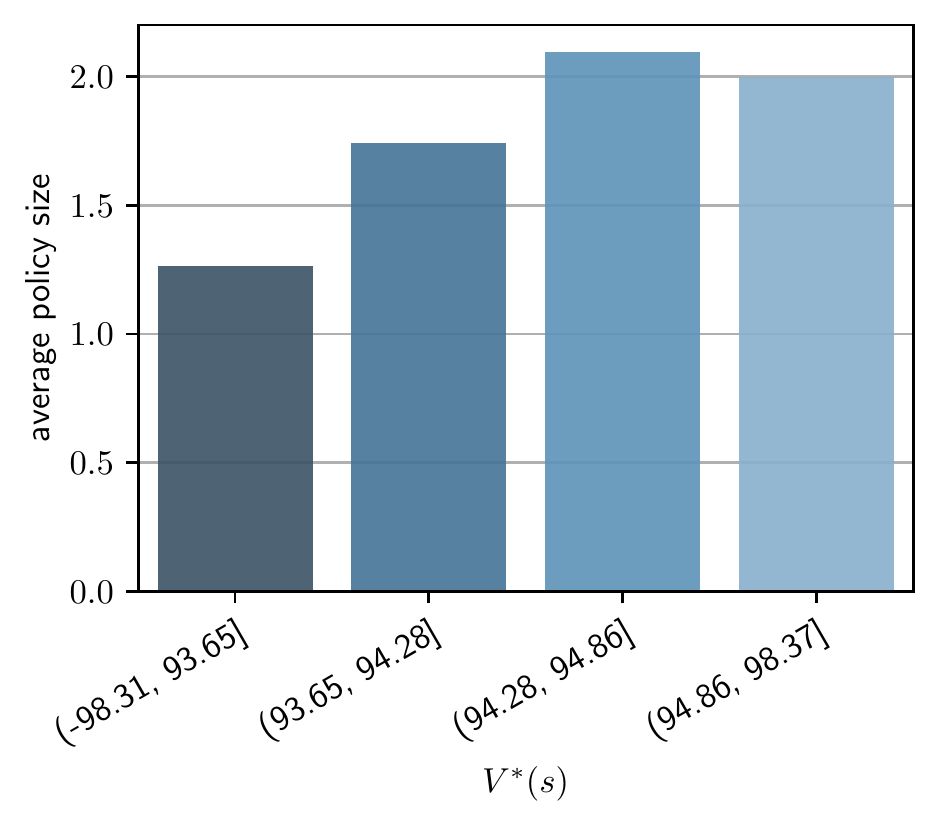}
    \caption{Average size of $\pi(s)$ at $\zeta=0.05$ for states with different $V^*(s)$ values (grouped into quartiles) on \textsf{MIMIC-sepsis}. }
    \label{fig:mimic-psize}
\end{figure}

In \cref{fig:mimic_actions}, we computed the `average' near-equivalencies across all states. However, different sets of actions could be considered near-equivalent for different states. To better understand how action near-equivalencies might differ for different group of patients, we grouped the states into quartiles based on the $V^*$ values and calculated the average policy size (\cref{fig:mimic-psize}). Compared to `sicker' states (with a lower value), the `healthier' states (with a higher value) had on average more near-equivalencies. We hypothesize that when a patient is stable, the choice of action has less impact on the final outcome compared to less stable states.

\section{Conclusion}
In the context of learning SVPs for MDPs, we propose a model-free algorithm that is a variant of TD learning. On both synthetic and real RL tasks, our algorithm discovers meaningful action near-equivalencies, while maintaining overall near-optimality across states. Though the theoretical guarantees only hold for DAG settings, the near-greedy action selection heuristic can be easily extended to more complex settings involving non-DAGs and function approximation. In practice, to improve convergence and near-optimality guarantees, one could encode temporal information into the states for a discrete state space, converting a single ground state with different histories (e.g., visit number) into different states, effectively making the MDP a DAG. Despite current limitations, our proposed framework represents an important step toward clinician/human-in-the-loop decision making. Such a framework, in which both optimality guarantees and action choices are provided, allows clinicians (and patients) to incorporate additional information when making treatment decisions. Though motivated in a healthcare setting, our approach could apply more broadly to other application domains involving humans-in-the-loop, such as intelligent tutoring systems or self-driving cars.

\section*{Acknowledgments}

This work was supported by the National Science Foundation (NSF award no. IIS-1553146) and the National Library of Medicine (NLM grant no. R01LM013325). The views and conclusions in this document are those of the authors and should not be interpreted as necessarily representing the official policies, either expressed or implied, of the National Science Foundation nor the National Library of Medicine. The authors would like to thank Satinder Singh, Brahmajee Nallamothu, Jessica Golbus, and members of the \href{https://wiens-group.engin.umich.edu/}{MLD3 group} for helpful discussions regarding this work, as well as the reviewers for constructive feedback.

\bibliography{ref}
\bibliographystyle{icml2020}

%%%%%%%%%%%%%%%%%%%%%%%%%%%%%%%%%%%%%%%%%%%%%%%%%%%%%%%%%%%%%%%%%%%%%%%%%%%%%%%
%%%%%%%%%%%%%%%%%%%%%%%%%%%%%%%%%%%%%%%%%%%%%%%%%%%%%%%%%%%%%%%%%%%%%%%%%%%%%%%
% DELETE THIS PART. DO NOT PLACE CONTENT AFTER THE REFERENCES!
%%%%%%%%%%%%%%%%%%%%%%%%%%%%%%%%%%%%%%%%%%%%%%%%%%%%%%%%%%%%%%%%%%%%%%%%%%%%%%%
%%%%%%%%%%%%%%%%%%%%%%%%%%%%%%%%%%%%%%%%%%%%%%%%%%%%%%%%%%%%%%%%%%%%%%%%%%%%%%%

\clearpage
\appendix
\onecolumn

\icmltitlerunning{Clinician-in-the-Loop Decision Making: RL with Near-Optimal Set-Valued Policies --- Supplementary Materials}

\section{Near-Optimal SVP With Additive Near-Optimality}\label{appx:additive}
We can quantify the near-optimality of any given SVP $\pi$ by using a version of the performance difference lemma \citep{kakade2002approximately}.

\begin{theorem}
For any SVP $\pi$, if for every state $s \in \mathcal{S}$:
\[\max_{a\in \pi(s)} \big(Q^*(s,a^*) - Q^*(s,a)\big) \leq \epsilon \ ,\]
% \begin{align*}
%     \forall s \in \mathcal{S}, \forall a \in \pi(s): Q^*(s,a^*) - Q^*(s,a) \leq \epsilon \ ,
% \end{align*}
then 
\[ V^*(s) - V^\pi(s) \le \frac{\epsilon}{1-\gamma} \ .\]

\end{theorem}
\begin{proof}
Note that $V^*(s) \leq V^{\pi}(s)$ for any SVP $\pi$ (equality holds when the SVP $\pi$ corresponds to an optimal policy). Denoting $a^*=\pi^*(s)$, and $\bar{a}=\argmin_{a\in \pi(s)}Q^{\pi}(s,a)$. We can evaluate the difference between $V^*$ and $V^{\pi}$ for a particular state $s$:
% \scalebox{0.9}{\begin{minipage}{\linewidth}
% \vspace{-1em}
\begin{align*}
    \quad V^*(s) - V^{\pi}(s) 
    &= \left[r(s,a^*) + \gamma \mathbb{E}_{s'|s,a^*} V^*(s')\right] 
     - \left[r(s,\bar{a}) + \gamma\mathbb{E}_{s'|s,\bar{a}} V^{\pi}(s')\right] \\
    &= \left[r(s,a^*) + \gamma \mathbb{E}_{s'|s,a^*} V^*(s')\right] 
     {\color{blue} - \left[r(s,\bar{a}) + \gamma \mathbb{E}_{s'|s,\bar{a}} V^*(s')\right]} \\
    & \ \ \ \ \hspace{12.2em}  {\color{blue} + \left[r(s,\bar{a}) + \gamma \mathbb{E}_{s'|s,\bar{a}} V^*(s')\right]}
     - \left[r(s,\bar{a}) + \gamma\mathbb{E}_{s'|s,\bar{a}} V^{\pi}(s')\right] \\
    & \ \ \ \ \ \text{(adding and subtracting the expressions in {\color{blue} blue})} \\
    &= \left[ Q^*(s,a^*) - Q^*(s,\bar{a}) \right] + \gamma \left[ \mathbb{E}_{s'|s,\bar{a}} \left(V^*(s') - V^{\pi}(s') \right) \right] \ .
\end{align*}
% \end{minipage}
% }
Suppose we are guaranteed that for every state $s$, we have the following bound on the action-value gap of actions in $\pi(s)$: 
\[\max_{a\in \pi(s)} \big(Q^*(s,a^*) - Q^*(s,a)\big) \leq \epsilon \ ,\] or equivalently,
\[\forall a\in \pi(s): Q^*(s,a^*) - Q^*(s,a) \leq \epsilon \ ,\] we can further simplify:
\[\ \quad V^*(s) - V^{\pi}(s) \leq \epsilon + \gamma \left[ \mathbb{E}_{s'|s,\bar{a}} \left(V^*(s') - V^{\pi}(s') \right) \right] \ .\]

Unrolling the recursive expression:
\begin{align*}
    \quad V^*(s) - V^{\pi}(s) 
    &\leq \epsilon + \gamma\epsilon + \gamma^2\left[ \mathbb{E}_{s''|s,\bar{a},s', \bar{a}'} \left(V^*(s'') - V^{\pi}(s'') \right) \right] \\
    &\leq \epsilon + \gamma\epsilon + \gamma^2\epsilon +\cdots \\
    &= \frac{\epsilon}{1-\gamma} \ . \qedhere
\end{align*}
\end{proof}

If we want the worst-case values of $\pi$ to be within a margin defined as some fraction $\zeta$ of the maximum magnitude optimal value, e.g.,
\[\max_{s} \left(V^*(s) - V^{\pi}(s) \right) \leq \zeta \|V^*\|_\infty \ ,\] 
then we can set \[\frac{\epsilon}{1-\gamma} = \zeta \|V^*\|_\infty \ , \]
which implies that the action-value gap should be upper bounded by $\epsilon=(1-\gamma)\zeta \|V^*\|_\infty$. In practice, once we learn $Q^*$ and $V^*$, we can construct the SVP as \[\pi(s) = \{ a : Q^*(s,a) \geq V^*(s) - \epsilon \}. \]

\section{Learning SVPs via an Exponential Action Space -- And Why It Does Not Work} \label{appx:exp-action-space}

Alternatively, one might reformulate the task of learning SVPs by considering an exponentially large action space, $\tilde{\mathcal{A}} = 2^{\mathcal{A}} \setminus \{\varnothing\}$. By applying standard approaches to an MDP with this new action space, one can learn a policy $\pi$ that maps each state to an element of $\tilde{\mathcal{A}}$, such that $\pi(s) = \tilde{a}$. Under this formulation, Q-values are defined over $\mathcal{S} \times \tilde{\mathcal{A}}$, which we denote $Q^{\pi}(s,\tilde{a})$. Consider the worst-case Q-values defined analogously to \cref{def:worst-case-vf}: \[ Q^{\pi}(s,\tilde{a}) = \min_{a\in \tilde{a}} Q^{\pi}(s,a). \] Then, for any $\tilde{a} \in \tilde{\mathcal{A}}$, we have \[ Q^{\pi}(s,\tilde{a}) = \min_{a\in \tilde{a}} Q^{\pi}(s,a) \leq \max_{a\in\tilde{a}} Q^{\pi}(s,a), \]
and since $Q^{\pi}(s,a) = Q^{\pi}(s,\{a\})$, there exists an $\tilde{a}^*$ such that
\[ Q^{\pi}(s,\tilde{a}) \leq Q^{\pi}(s,\tilde{a}^*) \text{ where } \tilde{a}^* = \left\{ \argmax_{a\in\tilde{a}} Q^{\pi}(s,a) \right\} . \]
Intuitively, selecting the best action in $\tilde{a}$ is always \textit{no worse} than selecting the worst action in $\tilde{a}$. This suggests that for any non-singleton set action $\tilde{a}$, we can always find a singleton set action $\tilde{a}^*$ that is better. Thus, this formulation results in trivial SVPs and does not discover near-equivalent actions. To yield meaningful solutions, one would require additional constraints.

\section{Example: Non-Existence of Near-Greedy SVP Fixed-Point}\label{appx:non-existence}

Recall the near-greedy fixed-point equation: 
\begin{align*}
    \pi(s) = \{a : Q^{\pi}(s,a) \geq (1-\zeta)V^{*}(s)\} \text{ where } 
    Q^{\pi}(s,a) = r(s,a) + \gamma \mathbb{E}_{s'|s,a} \left[ \min_{a'\in \pi(s')} \{Q^{\pi}(s',a')\} \} \right] \label{eqn:near-greedy}
\end{align*}
Consider the MDP in \cref{fig:mdp-example} with two non-terminal states $\{s_1, s_2\}$ and two actions $\{L,R\}$. Let $\gamma =0.9$, $\zeta =0.2$. Here, $V^*=[0.9, 1]$. There are two candidate SVPs, both of which fail to satisfy the near-greedy fixed-point equation. 

\begin{itemize}
    \item Suppose $\pi(s_1) = \{R\}$, $\pi(s_2)=\{R\}$. Then $Q^{\pi}(s_2, L)=0.81 > (1-\zeta)V^*(s_2)$, meaning that $L$ is a near-optimal action at $s_2$ but not included in $\pi(s_2)$.
    \item Suppose $\pi(s_1)=\{R\}$, $\pi(s_2)=\{L,R\}$. Then the worst-case $Q^{\pi}(s_2, L)=0$ because the agent falls into a cycle in the worst case, and thus $L$ is not a near-optimal action but is included in $\pi(s_2)$.
\end{itemize}

\begin{figure}[h]
    \centering
    \includegraphics[width=0.4\linewidth]{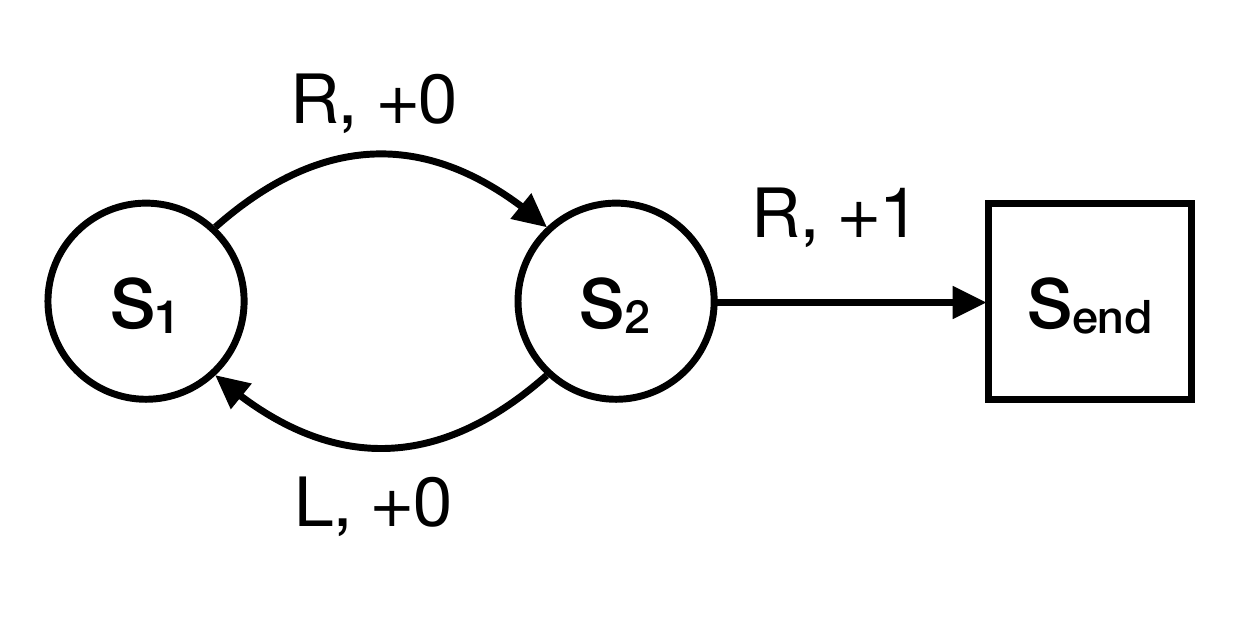}
    \caption{A three-state MDP with no near-greedy fixed-point solution when $\gamma = 0.9$ and $\zeta=0.2$.}
    \label{fig:mdp-example}
\end{figure}

\section{More on the Conservative Heuristic}\label{appx:conservative}
\begin{theorem}
The conservative $\zeta$-optimal SVP exists and is unique for any MDP with non-negative rewards.
\end{theorem}
\begin{proof} % Intuition: the optimal action at every state should always "make it" into the set.
In the conservative heuristic, there is no recursive relationship between policy $\pi$ and its value function $V^{\pi}$ or $Q^{\pi}$. The policy construction depends on the lower-bound action-value function $\check{Q}^{*}$, which computes an expectation over $V^{*}$ and immediate rewards $r$, and is thus unique, and so is $\pi$. 

To show that $\pi$ is a valid SVP, we will show that the optimal action at every state is always included in $\pi(s)$ such that $\forall s \in \mathcal{S}, \pi(s) \neq \varnothing$. Consider the optimal action at state $s$, $a^* = \argmax_{a\in \mathcal{A}} Q^*(s,a)$ where $V^*(s) = Q^*(s,a^*)$, we have: 

\begin{align*}
    % &Q^*(s,a^*) = r(s,a^*) + \gamma \mathbb{E}_{s'|s,a^*} V^*(s') \\
    % &\text{Thus, } 
    \check{Q}_{\zeta}^*(s,a^*)
    &= r(s,a^*) + \gamma \mathbb{E}_{s'|s,a^*} (1-\zeta) V^*(s') \\
    &= \zeta r(s,a^*) + (1-\zeta) [r(s,a^*) + \gamma \mathbb{E}_{s'|s,a^*} V^*(s')] \\
    &= \zeta r(s,a^*) + (1-\zeta) Q^*(s,a^*) \\
    &\geq (1-\zeta) Q^*(s,a^*) \\
    &= (1-\zeta) V^*(s). \qedhere
\end{align*}
\end{proof}

% \begin{theorem}
% The conservative TD algorithm converges for any MDP with non-negative rewards.
% \end{theorem}
% \begin{proof} \textit{(sketch)}
% As mentioned above, the conservative heuristic calculates an expectation over $V^*$ and $r(s,a)$ and does not involve any recursive relationship. Thus, it always converges under the same conditions for stochastic approximation. 
% % as long as there are enough samples exploration, and learning rate is decayed appropriately, it will converge to the correct values. well approximated with the sample average, There is no problem of moving target. 
% \end{proof}

Since the conservative heuristic calculates an expectation over $V^*$ and $r(s,a)$ and does not involve any recursive relationship, after learning $Q^*$ (and thus $V^*$), we can apply a standard stochastic approximation algorithm with provable convergence guarantees \citep{robbins1951}. While the conservative heuristic has good theoretical properties, in \cref{sec:algo-behavior} we observe that it does not discover as many near-optimal actions compared to near-greedy (due to it being conservative). 

% Unlike the conservative heuristic, the near-greedy SVP needs to satisfy a recursive relationship: 
% \begin{align*}
%     \pi(s) &= \{a : Q^{\pi}(s,a) \geq (1-\zeta)V^{*}(s)\}, \forall s\in\mathcal{S}, \text{ and} \\
%     Q^{\pi}(s,a) &= r(s,a) + \gamma \mathbb{E}_{s'|s,a} V^{\pi}(s'), \forall s\in\mathcal{S}, a\in\mathcal{A}.
% \end{align*}

\section{Convergence Analysis for the Near-Greedy TD Algorithm\texorpdfstring{ (\cref{alg:near-greedy})}{}}
\label{appx:near-greedy-convergence-proof}

\subsection{Contraction}\label{appx:contraction}

For the case of a general MDP (possibly non-DAG), we refer to the convergence proofs of TD methods such as Q-learning and expected SARSA, which have been extensively studied in the tabular setting for problems with discrete state and action spaces \citep{watkins1992q,melo2001convergence,van2009expectedSARSA}. For Q-learning, given bounded rewards, $Q$ converges to the optimal value function $Q^{*}$, i.e., $Q(s,a) \simeq Q^{*}(s,a)$ for all $s\in \mathcal{S}$, $a \in \mathcal{A}$ with probability $1$, under regular conditions for stochastic approximation: each $(s,a)$ is updated infinitely many times, $\sum_{t} \alpha_{t}=\infty$, and $\sum_{t} \alpha_{t}^{2}<\infty$. One of the key steps in the proof involves showing that the update operator $\mathsf{H}$ is a contraction with respect to sup-norm \citep{melo2001convergence}:
\begin{align*}
    & \text{Update operator:} \\
    & (\mathsf{H}Q)(s,a) = r(s,a) + \gamma \mathbb{E}_{s'|s,a} \max_{a'\in\mathcal{A}} Q(s',a') \\
    & \text{based on the Bellman optimality equation, and} \\
    & \|\mathsf{H}Q_1 - \mathsf{H}Q_2\|_{\infty} \leq \gamma \|Q_1 - Q_2\|_{\infty}.
\end{align*}

% In Q-learning, the optimal Q-function is a fixed point of a contraction operator H 
Since the proposed algorithms have the same structure as TD learning, ideally we would have the same convergence guarantees. Consider the following update operator for the near-optimal TD algorithm:
\begin{align*}
    & (\mathsf{H}Q)(s,a) = r(s,a) + \gamma \mathbb{E}_{s'|s,a} \min_{a'\in\pi(s')} Q(s',a') \\
    & \text{where } \pi(s') = \{ a': Q^{\pi}(s',a') \geq (1-\zeta) V^{*}(s') \}.
\end{align*}

In an attempt to show that the update operator is a contraction, we can manipulate $\|\mathsf{H}Q_1 - \mathsf{H}Q_2\|_{\infty}$ in a similar way:
\begin{align*}
    \quad \|\mathsf{H}Q_1 - \mathsf{H}Q_2\|_{\infty}
    &= \max_{s,a} \left| \left(r(s,a) + \gamma \mathbb{E}_{s'|s,a} \min_{a_1'\in\pi_1(s')} Q_1(s',a_1') \right) - \left(r(s,a) + \gamma \mathbb{E}_{s'|s,a} \min_{a_2'\in\pi_2(s')} Q_2(s',a_2') \right) \right| \\
    &= \max_{s,a} \left| \gamma \mathbb{E}_{s'|s,a} \left[ \min_{a_1'\in\pi_1(s')} Q_1(s',a_1') - \min_{a_2'\in\pi_2(s')} Q_2(s',a_2') \right] \right| \\
    &\leq \max_{s,a} \gamma \mathbb{E}_{s'|s,a} \left| \min_{a_1'\in\pi_1(s')} Q_1(s',a_1') - \min_{a_2'\in\pi_2(s')} Q_2(s',a_2') \right| \\
    &\leq \gamma \max_{s'} \left| \min_{a_1'\in\pi_1(s')} Q_1(s',a_1') - \min_{a_2'\in\pi_2(s')} Q_2(s',a_2') \right| \\
    &\leq \gamma \max_{s'} \left| V^*(s') - (1-\zeta)V^*(s') \right| \\
    &= \gamma \zeta \max_{s'} V^{*}(s') 
    = \gamma \zeta \|V^{*}\|_{\infty}. 
\end{align*}

With this loose upper bound, the update operator is not necessarily a contraction, suggesting that the algorithm might not converge for a general MDP.

\subsection{Convergence Proof for DAG MDPs}
We first state a key result from martingale theory that we will use:

\begin{theorem}[Martingale Convergence Theorem \citep{williams1991probability}]
Consider $\{M_n\}_{n \in \mathbb{N}}$ as martingale in $\mathbb{R}^d$ with
\[\sum_{n \ge 0} \mathbb{E}\left[ \|M_{n+1} - M_n\|^2 | \mathcal{F}_n \right] < \infty\]
then there exists a random variable $M_\infty \in \mathbb{R}$ such that $\|M_\infty\| < \infty$ almost surely and $M_n \rightarrow_{n\rightarrow \infty} M_{\infty}$ almost surely.
\label{thm:mart_conv}
\end{theorem}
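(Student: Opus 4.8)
The plan is to reduce the statement to the almost-sure convergence of an $L^2$-bounded real-valued martingale by a localization (stopping-time) argument, and then invoke Doob's forward convergence theorem. First I would reduce to the scalar case $d=1$: since $\|M_{n+1}-M_n\|^2=\sum_{i=1}^d(M_{n+1}^{(i)}-M_n^{(i)})^2$, each coordinate $\{M_n^{(i)}\}_n$ is a real martingale satisfying $\sum_n\mathbb{E}[(M_{n+1}^{(i)}-M_n^{(i)})^2\mid\mathcal{F}_n]<\infty$ almost surely, and coordinatewise a.s.\ convergence to finite limits yields a.s.\ convergence of $M_n$ to a finite limit. Replacing $M_n$ by $M_n-M_0$ — still a martingale, with the same increments and zero (hence square-integrable) initial value — I may further assume $M_0=0$.

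Next I would bring in the predictable compensator $A_n:=\sum_{k=0}^{n-1}\mathbb{E}[(M_{k+1}-M_k)^2\mid\mathcal{F}_k]$, which is nondecreasing, $\mathcal{F}_{n-1}$-measurable, and has $A_\infty<\infty$ almost surely by hypothesis. From $\mathbb{E}[M_{n+1}\mid\mathcal{F}_n]=M_n$ one obtains the orthogonality identity $\mathbb{E}[M_{n+1}^2\mid\mathcal{F}_n]=M_n^2+\mathbb{E}[(M_{n+1}-M_n)^2\mid\mathcal{F}_n]$, so that $M_n^2-A_n$ is a martingale. Because $\mathbb{E}[A_\infty]$ need not be finite, I would localize: for $c>0$ let $\tau_c:=\inf\{n\ge0:A_{n+1}>c\}$, which is a stopping time since $A$ is predictable, and note $\{\tau_c=\infty\}=\{A_\infty\le c\}\uparrow\Omega$ a.s.\ as $c\to\infty$. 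The stopped process $M^{(c)}_n:=M_{n\wedge\tau_c}$ is a martingale with compensator $A_{n\wedge\tau_c}\le c$, so $\mathbb{E}[(M^{(c)}_n)^2]=\mathbb{E}[A_{n\wedge\tau_c}]\le c$ for all $n$; thus $M^{(c)}$ is bounded in $L^2$, hence in $L^1$.

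Then I would apply Doob's forward martingale convergence theorem (the $L^1$-bounded case, proved in the cited reference via the upcrossing inequality): $M^{(c)}_n\to M^{(c)}_\infty$ almost surely with $|M^{(c)}_\infty|<\infty$ a.s. On $\{\tau_c=\infty\}$ one has $M^{(c)}_n=M_n$ for every $n$, so $M_n$ converges a.s.\ on $\{\tau_c=\infty\}$; letting $c\to\infty$ over the integers, $M_n$ converges a.s.\ on $\bigcup_{c\in\mathbb{N}}\{\tau_c=\infty\}$, which is all of $\Omega$ up to a null set. Setting $M_\infty:=\lim_n M_n$ on that event and $M_\infty:=0$ elsewhere gives an a.s.-finite limit, and undoing the recentering and the coordinate reduction completes the argument.

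The main obstacle is precisely the gap between the hypothesis (the conditional quadratic variation is a.s.\ finite) and the classical hypotheses of martingale convergence ($L^2$- or $L^1$-boundedness): since $\mathbb{E}[A_\infty]$ can be $+\infty$, one cannot directly bound $\sup_n\mathbb{E}[M_n^2]$, and the stopping-time localization is what repairs this. The remaining nontrivial input, a.s.\ convergence of an $L^1$-bounded martingale, is Doob's upcrossing estimate, which I would quote from the reference rather than reprove.
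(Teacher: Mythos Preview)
The paper does not prove this statement at all: Theorem~\ref{thm:mart_conv} is quoted from \citet{williams1991probability} as a standard input and is used, without proof, inside the argument for Theorem~\ref{thm:greedy_converge_app}. So there is no ``paper's own proof'' to compare against; your proposal goes well beyond what the paper attempts.

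That said, your sketch is the standard route to this result and is essentially correct. Reducing to $d=1$, recentering to $M_0=0$, introducing the predictable compensator $A_n$, localizing via $\tau_c=\inf\{n:A_{n+1}>c\}$ so that the stopped martingale is $L^2$-bounded, and then invoking Doob's forward convergence theorem is exactly how this is done in Williams. One small point worth tightening: when you write ``$M_n^2-A_n$ is a martingale'' and then deduce $\mathbb{E}[(M^{(c)}_n)^2]=\mathbb{E}[A_{n\wedge\tau_c}]$, you are implicitly using that each $M_n$ (or at least each $M^{(c)}_n$) is square-integrable, which the hypothesis does not give directly. The clean fix is to argue inductively for the \emph{stopped} process: $M^{(c)}_0=0\in L^2$, and on $\{n<\tau_c\}$ the predictability of $A$ gives $\mathbb{E}[(M_{n+1}-M_n)^2\mid\mathcal{F}_n]=A_{n+1}-A_n\le c$, so $\mathbb{E}[(M^{(c)}_{n+1}-M^{(c)}_n)^2]\le c$ and $M^{(c)}_n\in L^2$ for all $n$, after which the orthogonality identity and the bound $\mathbb{E}[(M^{(c)}_n)^2]\le c$ follow. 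With that adjustment your argument is complete.
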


Using this standard result, we can show the following convergence result:
\addtocounter{theorem}{-4}
\begin{theorem}[restated] \label{thm:greedy_converge_app}
The near-greedy TD algorithm (\cref{alg:near-greedy}) converges to the unique solution if the MDP is a DAG with non-negative rewards, under the same conditions for regular TD learning: rewards have bounded variance, each $(s,a)$ is updated infinitely many times, $\sum_{t} \alpha_{t}=\infty$, and $\sum_{t} \alpha_{t}^{2}<\infty$ for each $(s,a)$ \citep{watkins1992q,melo2001convergence}. 
\end{theorem}
\addtocounter{theorem}{+4}

\begin{proof}
Given the DAG MDP, we use $H$ to denote the maximum number of steps (depth of the topological sort tree) and $(s_h, a_h)$ to denote a state-action pair for a particular state $s_h$ at step $h$. We use $Q_t(s_h,a_h)$ to denote the Q-value estimate after episode $t$ in \cref{alg:near-greedy}. In addition, we overload the notation $Q(h)$ to refer to the vector containing Q-values of all state-action pairs $\mathcal{S}(h) \times \mathcal{A}(h)$ at step $h$.

From \cref{thm:greedy_exist} we know that for a DAG MDP, the equation $\pi(s) = \{a : Q^{\pi}(s,a) \geq (1-\zeta)V^{*}(s)\}$ has a unique fixed point solution, which we denote $\pi^{\zeta}$ and its worst-case value function as $Q^{\zeta}$. Furthermore, we define the following:
\begin{align*}
    \overline{a}(s_h) &= \argmin_{a \in \pi^{\zeta}(s_h)} Q^{\zeta}(s_h,a) \\
    \underline{a}(s_h) &= \argmax_{a \notin \pi^{\zeta}(s_h)} Q^{\zeta}(s_h,a)
\end{align*}

Note that $Q^{\zeta}(s, \overline{a}(s)) \geq (1-\zeta)V^*(s) \geq Q^{\zeta}(s,\underline{a}(s))$. Intuitively, $\overline{a}$ gives us the worst-case action whose value will be used in the update / backup, whereas $\underline{a}$ is the best action outside the near-optimal action set for the given $\zeta$. 

We will prove the convergence of the near-greedy TD algorithm for DAG MDPs via backward induction over the episode steps $H, H-1, \ldots, 1$. 

\begin{adjustwidth}{1em}{0pt}
{\emph{Base step.}} For every terminal state $s_H$, the estimates are correct by initialization as $Q^{\zeta}(s,a) = 0$ and $\pi^{\zeta}(s) = \mathcal{A}$ trivially. Therefore, for all $(s_H, a_H)$ and $\epsilon \ge 0$, there exists $t_{\epsilon} \ge 0$, such that, for all $t \ge t_{\epsilon}$, $\|Q_t(H) - Q^{\zeta}(H)\|_\infty \le \epsilon$ where $Q(H)$ is the vector containing Q-values of state-action pairs at step $H$.

{\emph{Inductive step.}} Assume that $Q_t$ for all state-action pairs in levels $\{h+1, \ldots, H\}$ converge to the true $Q^{\zeta}$ almost surely. In other words, other than sequences of measure $0$, under all possible updates, we have $Q_t(s_j,a_j) \rightarrow Q^{\zeta}(s_j,a_j)$ for all $j \ge h+1$. This guarantees that for all $(s_{h+1},a_{h+1})$, for every $\epsilon > 0$, there exists $t_\epsilon > 0$ such that, for all $t \ge t_\epsilon$, $\|Q_t(h+1) - Q^{\zeta}(h+1)\|_\infty \le \epsilon$. For notational convenience in the inductive step, we use $(s,a)$ to denote state-action pairs at step $h$ and $(s',a')$ to denote state-action pairs at step $h+1$. 

Let $\Delta_1(s') = Q^{\zeta}(s',\overline{a}(s')) - Q^{\zeta}(s',\underline{a}(s'))$ and $\Delta_2(s') = \max_{a \in \pi^{\zeta}(s')} Q^{\zeta}(s',a) - Q^{\zeta}(s',\overline{a}(s'))$. Note that, if we pick $\epsilon < \tfrac{1}{2}\min_{s'} (\Delta_1(s'), \Delta_2(s'))$, then convergence implies that, for each state $s'$, after some episode $t_0$, a constant action $\overline{a}(s')$ is used in the near-greedy update of Q-values at step $h$.

Consider the sequence of Q-values $\{Q_t(h)\}_{t \in \mathbb{N}}$. Let $\mathcal{F}_{th}$ denote the history of the algorithm till step $h$ of episode $t$. In our proof, we consider the updates made to $Q(h)$ after $t_0$ with $Q_{t_0}(h)$ as its initialization for our analysis. This reduces the proof structure to a simple stochastic approximation based argument where the constant near-greedy action is used while bootstrapping for any state $s'$. At any such episode $t > t_0$, the algorithm makes an update of the following form to $Q_t(h)$:
\begin{align*}
    Q_{t+1}(s,a) = \left\{
	\begin{array}{ll}
		Q_t(s,a)  & \mbox{if } (s,a)_{th} \ne (s,a) \\
		(1-\alpha_{th})Q_t(s,a) + \alpha_{th} [r_{th} + \gamma Q_t(s'_{t},\overline{a}(s'_{t}))] & \mbox{if } (s,a)_{th} = (s,a)
	\end{array}
\right.
\end{align*}
We can rewrite the bootstrapping update as: 
\begin{align}
    Q_{t+1}(s,a) = {} & \underbrace{(1-\alpha_{th})Q_t(s,a) + \alpha_{th} \mathbb{E}_{r,s'}[r + \gamma Q_t(s',\overline{a}(s'))]}_{\text{Bellman update}} + \underbrace{\alpha_{th} w_{th}}_{\text{noise term}}
    \label{eq:Q-update}
\end{align}
where $w_{th} = [r_{th} + \gamma Q_t(s'_{t},\overline{a}(s'_{t}))] - \mathbb{E}_{r,s'}[r + \gamma Q_t(s',\overline{a}(s'))]$. We now analyze these two components of the update separately.

\begin{adjustwidth}{1em}{0pt}
{\emph{Bellman update.}} First note that $\mathbb{E}_{r,s'}[(r + \gamma Q_t(s',\overline{a}(s')))^2] < \infty$ by using the assumption $\mathbb{E}[r^2] < \infty$ and the inductive assumption on $Q_t$. In the near-greedy TD algorithm, for each $(s,a)$, the updates are made using step size $\alpha_t$ such that $\sum_{t} \alpha_{t}=\infty$, and $\sum_{t} \alpha_{t}^{2}<\infty$. Using $\bar{Q}(s,a)$ to denote the noise-free update term in \cref{eq:Q-update}, for the Bellman update sequence, we have:
\begin{align*}
    \bar{Q}_{t+1}(s,a) - Q^{\zeta}(s,a) = {} & (1-\alpha_{th})(Q_t(s,a)-Q^{\zeta}(s,a)) + \alpha_{th} \gamma\mathbb{E}_{s'}[ Q_t(s',\overline{a}(s')) - Q^{\zeta}(s',\overline{a}(s'))] \\
    \le {} & (1-\alpha_{th})(Q_t(s,a)-Q^{\zeta}(s,a)) + \alpha_{th} \gamma\epsilon
\end{align*}
where the last step follows from the inductive assumption. Using the standard results from stochastic approximation \citep{robbins1951}, we can conclude that the deterministic error $\Pi_{t>t_0} (1-\alpha_{th})^2 (Q_{t_0}(s,a) - Q^{\zeta}(s,a))^2$ converges to 0 implying $\lim\,\sup_{t \rightarrow \infty} (\bar{Q}_t(s,a) - Q^{\zeta}(s,a))^2 \le C\epsilon$ for some constant $C$. As the chosen $\epsilon$ is arbitrary, by the sandwich theorem for limits, the error incurred via the Bellman update sequence converges to $0$ almost surely.

{\emph{Noise term.}} We will now argue that the noise sequence $\sum_{t>t_0} \alpha_{th} w_{th}$ also converges to $0$. Note that, $Z_t = \sum_{t>t_0} \alpha_{th} w_{th} \in \mathbb{R}^{\mathcal{S}(h)\times \mathcal{A}(h)}$ is a martingale sequence as $\mathbb{E}[w_{th}(s,a)|\mathcal{F}_{th}] = 0$. Further, again by the bounded variance assumption over rewards and the inductive assumption over $Q(h+1)$, we have 
\begin{align*}
    \sum_{t > t_0} \mathbb{E}\left[ \|Z_{t+1} - Z_{t}\|^2 | \mathcal{F}_{th} \right] = \sum_{t > t_0} \alpha_{th}^2 \mathbb{E}\left[ \|w_{th}\|^2 | \mathcal{F}_{th} \right] \le c \cdot \sum_{t > t_0} \alpha_{th}^2 \le \infty
\end{align*}
Now using \cref{thm:mart_conv} and the definition $Z_{t_0} = 0$, we can conclude that the martingale converges to $0$ almost surely.
\end{adjustwidth}

We know that for two sequences of random variables $X_n$ and $Y_n$, if $X_n \rightarrow X$ and $Y_n \rightarrow Y$ almost surely, then $X_n + Y_n \rightarrow X+Y$ almost surely. Combining the two parts, we get $\|Q(h) - Q^{\zeta}(h)\|_\infty \rightarrow 0$ almost surely. This completes the inductive step. 

\end{adjustwidth}
By induction, this proves the desired convergence result.
\end{proof}

\section{Comparisons to the Mixed-Integer Programming (MIP) Baseline} \label{appx:MIP-baseline}

\citet{fard2011non} proposed a mixed-integer programming formulation for solving the maximal-size SVP in a finite-horizon tabular planning problem. The optimization problem jointly solves for the worst-case values $V$ and a binary representation of SVP $\pi$, where $\Pi(s,a) = \llbracket a \in \pi(s) \rrbracket$ is $1$ if $a$ is an element of $\pi(s)$, and $0$ otherwise. There are a total of $|\mathcal{S}|(|\mathcal{A}|+1)$ decision variables and $|\mathcal{S}| (|\mathcal{A}|+2)$ constraints. The formulation is reproduced below; see \citet{fard2011non} for more details. 

\vspace{-1.25em}
\scalebox{0.8}{
\begin{minipage}{1.25\linewidth}
\begin{align*}
    \max_{V, \Pi} \big[ \mu^{\intercal} V + & (V_{\max} - V_{\min}) e_s^{\intercal} \pi e_a \big]  \text{ subject to } \\
    & V(s) \geq (1-\zeta) V^*(s) && \forall s\in\mathcal{S} \\
    & \textstyle \sum_{a\in\mathcal{A}} \Pi(s,a) > 0 && \forall s\in\mathcal{S} \\
    & V(s) \leq r(s,a) + \gamma \sum_{s'\in\mathcal{S}} p(s'|s,a)V(s') + V_{\max}(1-\Pi(s,a)) && \forall s\in\mathcal{S}, \forall a\in\mathcal{A}
\end{align*}
\end{minipage}
}
\vspace{-0.6em}

Since the MIP approach requires knowledge of the MDP model, we implemented a dynamic programming based approach with the near-greedy heuristic, namely near-greedy value iteration (VI). We applied these two algorithms on simple environments where the MIP solution is tractable. On \textsf{Chain-5} with $\gamma=0.9$, where the underlying MDP is a DAG (\cref{fig:chain5-MIP}), near-greedy VI converged for all values of $\zeta$. The SVPs learned by both approaches satisfy near-optimality with respect to the given $\zeta$, as shown by the worst-case near-optimality percentages. For $\zeta \geq 0.1$, the SVP included all actions at every state. Even though near-greedy VI is not explicitly maximizing the policy size (unlike the MIP approach, which includes policy size as part of its objective function), for many of the cases it still finds an SVP solution with maximal size, or close to the maximal-size solution as found by MIP (when $\zeta=0.03$ and $0.04$ on this problem). On a non-DAG environment, \textsf{CyclicChain-5} with $\gamma=0.9$ (\cref{fig:cycchain5-MIP}), near-greedy VI did not converge for $0.2 \leq \zeta < 1$ (when a near-optimal SVP should only include the two `right' actions but no `left' actions). This is consistent with what we observed in \cref{fig:converge-a}. On this problem, when near-greedy VI does converge ($\zeta \leq 0.1$, which is a suitable range of values if one aims to learn \textit{close-to-optimal} behavior), it consistently finds the same maximal-size SVP as MIP. Compared to a model-based approach based on exhaustive search, our proposed near-greedy heuristic identifies SVP solutions that achieve good worst-case near-optimality and similar average policy sizes, despite the fact that we do not explicitly optimize for the size of the SVP.

\begin{figure}
\begin{center}
\renewcommand*{\arraystretch}{1.2}
\scalebox{0.85}{
\begin{tabular}{lccccc}
\toprule
& \multicolumn{2}{c}{\textbf{Near-greedy VI}} && \multicolumn{2}{c}{\textbf{MIP}} \\
\cmidrule{2-3} \cmidrule{5-6}
$\zeta$ & policy profile & \multicolumn{3}{c}{\makecell{\small average \\ \small policy size}} & policy profile \\
\midrule
$0$ & 
\adjustimage{scale=0.5,valign=m,padding=0em 2pt}{{fig_MIP/_chain_5_near-greedy_zeta_0.0}.pdf} 
{\small(100.0\%)} 
& 1 && 1
& 
{\small(100.0\%)} 
\adjustimage{scale=0.5,valign=m,padding=0em 2pt}{{fig_MIP/_chain_5_MIP_zeta_0.0}.pdf} 
\\
$0.01$ & 
\adjustimage{scale=0.5,valign=m,padding=0em 2pt}{{fig_MIP/_chain_5_near-greedy_zeta_0.01}.pdf} 
{\small(\; 99.0\%)} 
& 1.5 && 1.5 
& 
{\small(\; 99.0\%)} 
\adjustimage{scale=0.5,valign=m,padding=0em 2pt}{{fig_MIP/_chain_5_MIP_zeta_0.01}.pdf}
\\
$0.02$ & 
\adjustimage{scale=0.5,valign=m,padding=0em 2pt}{{fig_MIP/_chain_5_near-greedy_zeta_0.02}.pdf} 
{\small(\; 98.1\%)} 
& 1.75 && 1.75
& 
{\small(\; 98.0\%)} 
\adjustimage{scale=0.5,valign=m,padding=0em 2pt}{{fig_MIP/_chain_5_MIP_zeta_0.02}.pdf} 
\\
$0.03$ & 
\adjustimage{scale=0.5,valign=m,padding=0em 2pt}{{fig_MIP/_chain_5_near-greedy_zeta_0.03}.pdf} 
{\small(\; 97.1\%)} 
& 2 && 2.25 
& 
{\small(\; 97.0\%)} 
\adjustimage{scale=0.5,valign=m,padding=0em 2pt}{{fig_MIP/_chain_5_MIP_zeta_0.03}.pdf} 
\\
$0.04$ & 
\adjustimage{scale=0.5,valign=m,padding=0em 2pt}{{fig_MIP/_chain_5_near-greedy_zeta_0.04}.pdf} 
{\small(\; 96.2\%)} 
& 2.25 && 2.5
& 
{\small(\; 96.1\%)} 
\adjustimage{scale=0.5,valign=m,padding=0em 2pt}{{fig_MIP/_chain_5_MIP_zeta_0.04}.pdf} 
\\
$0.05$ & 
\adjustimage{scale=0.5,valign=m,padding=0em 2pt}{{fig_MIP/_chain_5_near-greedy_zeta_0.05}.pdf}
{\small(\; 95.2\%)} 
& 2.75 && 2.75
& 
{\small(\; 95.2\%)} 
\adjustimage{scale=0.5,valign=m,padding=0em 2pt}{{fig_MIP/_chain_5_MIP_zeta_0.05}.pdf} 
\\
$0.1$ & 
\adjustimage{scale=0.5,valign=m,padding=0em 2pt}{{fig_MIP/_chain_5_near-greedy_zeta_0.1}.pdf}
{\small(\; 90.2\%)} 
& 4 && 4
& 
{\small(\; 90.2\%)} 
\adjustimage{scale=0.5,valign=m,padding=0em 2pt}{{fig_MIP/_chain_5_MIP_zeta_0.1}.pdf} 
\\
$0.2$ & 
\adjustimage{scale=0.5,valign=m,padding=0em 2pt}{{fig_MIP/_chain_5_near-greedy_zeta_0.2}.pdf}
{\small(\; 90.2\%)} 
& 4 && 4
& 
{\small(\; 90.2\%)} 
\adjustimage{scale=0.5,valign=m,padding=0em 2pt}{{fig_MIP/_chain_5_MIP_zeta_0.2}.pdf} 
\\
$1$ & 
\adjustimage{scale=0.5,valign=m,padding=0em 2pt}{{fig_MIP/_chain_5_near-greedy_zeta_1.0}.pdf} 
{\small(\; 90.2\%)} 
& 4 && 4
& 
{\small(\; 90.2\%)} 
\adjustimage{scale=0.5,valign=m,padding=0em 2pt}{{fig_MIP/_chain_5_MIP_zeta_1.0}.pdf} 
\\
\bottomrule
\end{tabular}
}
\end{center}
\caption{SVPs learned by the near-greedy and MIP algorithms on \textsf{Chain-5} at different $\zeta$s. Parenthesized percentages denote the worst-case near-optimality.} \label{fig:chain5-MIP}
\end{figure}

\begin{figure}
\begin{center}
\renewcommand*{\arraystretch}{1.2}
\scalebox{0.85}{
\begin{tabular}{lccccc}
\toprule
& \multicolumn{2}{c}{\textbf{Near-greedy VI}} && \multicolumn{2}{c}{\textbf{MIP}} \\
\cmidrule{2-3} \cmidrule{5-6}
$\zeta$ & policy profile & \multicolumn{3}{c}{\makecell{\small average \\ \small policy size}} & policy profile \\
\midrule
$0$ & 
\adjustimage{scale=0.5,valign=m,padding=0em 2pt}{{fig_MIP_cyc/_cycchain_5_near-greedy_zeta_0.0}.pdf} 
{\small(100.0\%)} 
& 1 && 1
& 
{\small(100.0\%)} 
\adjustimage{scale=0.5,valign=m,padding=0em 2pt}{{fig_MIP_cyc/_cycchain_5_MIP_zeta_0.0}.pdf} 
\\
$0.01$ & 
\adjustimage{scale=0.5,valign=m,padding=0em 2pt}{{fig_MIP_cyc/_cycchain_5_near-greedy_zeta_0.01}.pdf} 
{\small(100.0\%)} 
& 1.5 && 1.5 
& 
{\small(100.0\%)} 
\adjustimage{scale=0.5,valign=m,padding=0em 2pt}{{fig_MIP_cyc/_cycchain_5_MIP_zeta_0.01}.pdf}
\\
$0.02$ & 
\adjustimage{scale=0.5,valign=m,padding=0em 2pt}{{fig_MIP_cyc/_cycchain_5_near-greedy_zeta_0.02}.pdf} 
{\small(\; 98.1\%)} 
& 1.75 && 1.75
& 
{\small(\; 98.9\%)} 
\adjustimage{scale=0.5,valign=m,padding=0em 2pt}{{fig_MIP_cyc/_cycchain_5_MIP_zeta_0.02}.pdf} 
\\
$0.03$ & 
\adjustimage{scale=0.5,valign=m,padding=0em 2pt}{{fig_MIP_cyc/_cycchain_5_near-greedy_zeta_0.03}.pdf} 
{\small(\; 97.9\%)} 
& 1.75 && 1.75 
& 
{\small(\; 98.9\%)} 
\adjustimage{scale=0.5,valign=m,padding=0em 2pt}{{fig_MIP_cyc/_cycchain_5_MIP_zeta_0.03}.pdf} 
\\
$0.04$ & 
\adjustimage{scale=0.5,valign=m,padding=0em 2pt}{{fig_MIP_cyc/_cycchain_5_near-greedy_zeta_0.04}.pdf} 
{\small(\; 96.8\%)} 
& 2 && 2
& 
{\small(\; 96.8\%)} 
\adjustimage{scale=0.5,valign=m,padding=0em 2pt}{{fig_MIP_cyc/_cycchain_5_MIP_zeta_0.04}.pdf} 
\\
$0.05$ & 
\adjustimage{scale=0.5,valign=m,padding=0em 2pt}{{fig_MIP_cyc/_cycchain_5_near-greedy_zeta_0.05}.pdf}
{\small(\; 96.8\%)} 
& 2 && 2
& 
{\small(\; 96.8\%)} 
\adjustimage{scale=0.5,valign=m,padding=0em 2pt}{{fig_MIP_cyc/_cycchain_5_MIP_zeta_0.05}.pdf} 
\\
$0.1$ & 
\adjustimage{scale=0.5,valign=m,padding=0em 2pt}{{fig_MIP_cyc/_cycchain_5_near-greedy_zeta_0.1}.pdf}
{\small(\; 96.8\%)} 
& 2 && 2
& 
{\small(\; 96.8\%)} 
\adjustimage{scale=0.5,valign=m,padding=0em 2pt}{{fig_MIP_cyc/_cycchain_5_MIP_zeta_0.1}.pdf} 
\\
$0.2$ & 
% \adjustimage{scale=0.5,valign=m,padding=0em 2pt}{{fig_MIP_cyc/_cycchain_5_near-greedy_zeta_0.2}.pdf}
% {\small(\; 90.2\%)} 
(did not converge)
& - && 2
& 
{\small(\; 96.8\%)} 
\adjustimage{scale=0.5,valign=m,padding=0em 2pt}{{fig_MIP_cyc/_cycchain_5_MIP_zeta_0.2}.pdf} 
\\
$1$ & 
\adjustimage{scale=0.5,valign=m,padding=0em 2pt}{{fig_MIP_cyc/_cycchain_5_near-greedy_zeta_1.0}.pdf} 
{\small(\; 16.1\%)} 
& 4 && 4
& 
{\small(\; 16.1\%)} 
\adjustimage{scale=0.5,valign=m,padding=0em 2pt}{{fig_MIP_cyc/_cycchain_5_MIP_zeta_1.0}.pdf} 
\\
\bottomrule
\end{tabular}
}
\end{center}
\caption{SVPs learned by the near-greedy and MIP algorithms on \textsf{CyclicChain-5} at different $\zeta$s. Parenthesized percentages denote the worst-case near-optimality.} \label{fig:cycchain5-MIP}
\end{figure}

\newpage
\section{Policy Evaluation for SVPs}
\label{appx:policy-evaluation}
For completeness, we describe the policy evaluation algorithm for an SVP and show that the update is a contraction, thereby, guaranteeing convergence.

Given an SVP $\pi$, the value functions are defined as:
\begin{align*}
    V^\pi(s) = {} & \min_{a \in \pi(s)} Q^\pi(s,a) \\
    Q^\pi(s,a) = {} & \mathbb{E}_{r,s'}\left[r + \gamma V^\pi(s') \right]
\end{align*}
The value function for any given policy $\pi$ can be evaluated easily via a simple modification of iterative policy evaluation algorithm for deterministic/stochastic policies \citep{sutton2018RL}:
\\
\begin{figure}[h]
% \scalebox{0.75}{%
\begin{minipage}{1.\linewidth}%
\begin{algorithm}[H]
  \caption{Iterative policy evaluation for set-valued policies}\label{alg:pol_eval}
  \begin{algorithmic}[1]
    \State \textbf{Input:} SVP $\pi$
    \State \textbf{Initialize} $Q(s,a)=0$ for all $s\in \mathcal{S}, a \in \mathcal{A}$
    \Repeat
        \State $\Delta \leftarrow 0$
        \For{each $s,a \in \mathcal{S} \times \mathcal{A}$}
            \State $Q'(s,a) = \mathbb{E}_{r,s'}\left[r + \gamma \min_{a' \in \pi(s')} Q(s',a') \right]$
            \State $\Delta \leftarrow \max(\Delta, |Q'(s,a) - Q(s,a)|)$
        \EndFor
        \State $Q \leftarrow Q'$
    \Until{$\Delta < \theta$}\\
    \Return $Q$
  \end{algorithmic}
\end{algorithm}
\end{minipage}
% }%
\end{figure}

We now show that the update in \cref{alg:pol_eval} is a contraction:
\begin{lemma}
For any pair of action-value functions $Q_1$ and $Q_2$, and a given policy $\pi$, we have:%
\begin{align*}
    \|\mathcal{T} Q_1 - \mathcal{T} Q_2\|_\infty \le \gamma\|Q_1 - Q_2\|_\infty
\end{align*}
\end{lemma}
\begin{proof}
For any $s,a \in \mathcal{S} \times \mathcal{A}$, we have:%
\begin{align*}
    \quad \left|(\mathcal{T} Q_1)(s,a) - (\mathcal{T} Q_2)(s,a)\right| 
    & {}  = \left|\mathbb{E}_{r,s'}\left[r + \gamma V^\pi_1(s') \right] - \mathbb{E}_{r,s'}\left[r + \gamma V^\pi_2(s') \right]\right| \\
    & {} = \gamma \left|\mathbb{E}_{s'}\left[ \min_{a_1 \in \pi(s')} Q_1(s',a_1)  - \min_{a_2 \in \pi(s')} Q_2(s',a_2)\right]\right| \\
    & {} \le \gamma \max_{s \in \mathcal{S}} \left|\left[ \min_{a_1 \in \pi(s)} Q_1(s,a_1)  - \min_{a_2 \in \pi(s)} Q_2(s,a_2)\right]\right|\\ 
    & {} \le \gamma \|Q_1 - Q_2\|_\infty. \qedhere
\end{align*}
\end{proof}

The contraction lemma further implies that \cref{alg:pol_eval} converges to the unique fixed point of the value function of the policy $\pi$. As the update is a straightforward modification of the usual Bellman operator, we can implement/analyze a fitted policy evaluation algorithm for SVPs as well.

\newpage
\section{Clinical Task Details}\label{appx:features}
\newcommand{\tabitem}{~~\llap{\textbullet}~~}

Following \citet{komorowski2018AI_Clinician}, we extracted 48 physiological features (\cref{tab:mimic-features}) to represent each patient. 

\begin{table}[h!]
    \centering
    \caption{The 48 physiological features}
    \label{tab:mimic-features}
    \scalebox{0.8}{
    \begin{tabular}{l}
    \toprule
        \textbf{Demographics/Static} \\
        Source tables: \texttt{PATIENTS}, \texttt{ADMISSIONS}, \\\texttt{ICUSTAYS}, \texttt{CHARTEVENTS}, \texttt{elixhauser\_quan} \\
        \tabitem Shock Index \\
        \tabitem Elixhauser \\
        \tabitem SIRS \\
        \tabitem Gender \\
        \tabitem Re-admission \\
        \tabitem GCS - Glasgow Coma Scale \\
        \tabitem SOFA - Sequential Organ Failure Assessment \\
        \tabitem Age \\
    \midrule
        \textbf{Lab Values} \\
        Source tables: \texttt{CHARTEVENTS}, \texttt{LABEVENTS} \\
        \tabitem Albumin \\
        \tabitem Arterial pH \\
        \tabitem Calcium \\
        \tabitem Glucose \\
        \tabitem Hemoglobin \\
        \tabitem Magnesium \\
        \tabitem PTT - Partial Thromboplastin Time \\
        \tabitem Potassium \\
        \tabitem SGPT - Serum Glutamic-Pyruvic Transaminase \\
        \tabitem Arterial Blood Gas \\
        \tabitem BUN - Blood Urea Nitrogen \\
        \tabitem Chloride \\
        \tabitem Bicarbonate \\
        \tabitem INR - International Normalized Ratio \\
        \tabitem Sodium \\
        \tabitem Arterial Lactate \\
        \tabitem CO2 \\
        \tabitem Creatinine \\
        \tabitem Ionised Calcium \\
        \tabitem PT - Prothrombin Time \\
        \tabitem Platelets Count \\
        \tabitem SGOT - Serum Glutamic-Oxaloacetic Transaminase \\
        \tabitem Total bilirubin \\
        \tabitem White Blood Cell Count \\
    \midrule
        \textbf{Vital Signs} \\
        Source tables: \texttt{CHARTEVENTS} \\
        \tabitem Diastolic Blood Pressure \\
        \tabitem Systolic Blood Pressure \\
        \tabitem Mean Blood Pressure \\
        \tabitem PaCO2 \\
        \tabitem PaO2 \\
        \tabitem FiO2 \\
        \tabitem PaO/FiO2 ratio \\
        \tabitem Respiratory Rate \\
        \tabitem Temperature (Celsius) \\
        \tabitem Weight (kg) \\
        \tabitem Heart Rate \\
        \tabitem SpO2 \\
    \midrule
        \textbf{Intake and Output Events} \\
        Source tables: \texttt{INPUTEVENTS\_CV}, \texttt{INPUTEVENTS\_MV}, \\ \texttt{OUTPUTEVENTS} \\
        \tabitem Fluid Output - 4 hourly period \\
        \tabitem Total Fluid Output \\
        \tabitem Mechanical Ventilation \\
    \midrule
        \tabitem Timestep \\
    \bottomrule
    \end{tabular}
    }
\end{table}

%%%%%%%%%%%%%%%%%%%%%%%%%%%%%%%%%%%%%%%%%%%%%%%%%%%%%%%%%%%%%%%%%%%%%%%%%%%%%%%
%%%%%%%%%%%%%%%%%%%%%%%%%%%%%%%%%%%%%%%%%%%%%%%%%%%%%%%%%%%%%%%%%%%%%%%%%%%%%%%

\end{document}